\documentclass{article}

\usepackage{microtype}
\usepackage{graphicx}
\usepackage{subcaption}
\usepackage{booktabs} % for professional tables

\usepackage{hyperref}
\usepackage{url}
\usepackage{comment}
\usepackage{algorithm}
\usepackage{algpseudocode}
\makeatletter\@namedef{ver@algorithmic.sty}{2009/08/24}\makeatother

\usepackage{tikz}
\usepackage{pgfplots}
\usepackage{float}  % Add this to your preamble

\usepackage[textsize=tiny]{todonotes}
\usepackage{amsthm}
\usepackage{placeins}

\usepackage{adjustbox} 
\usepackage{booktabs,multirow,makecell}
\usepackage[table]{xcolor}  % enables \cellcolor in tables

\definecolor{groupone}{RGB}{255,183,77}  % orange-ish
\definecolor{grouptwo}{RGB}{0,121,107}   % teal-ish
\newcommand{\bestgroupone}[1]{\cellcolor{groupone!20}#1}   % 1st
\newcommand{\bestgrouptwo}[1]{\cellcolor{grouptwo!25}#1} % 2nd

\pgfplotsset{
  tick style={black!60},
  grid style={gray!20,dashed},
}

\usepackage[accepted]{icml2026}

\usepackage{amsmath}
\usepackage{amssymb}
\usepackage{mathtools}
\usepackage{amsthm}

\usepackage{twemojis}
\definecolor{deepgreen}{HTML}{38612D} 
\definecolor{deepred}{HTML}{7B0000}

\usepackage[capitalize,noabbrev]{cleveref}

\theoremstyle{plain}
\newtheorem{theorem}{Theorem}[section]
\newtheorem{proposition}[theorem]{Proposition}

\theoremstyle{definition}

\theoremstyle{remark}

\usepackage[textsize=tiny]{todonotes}

\icmltitlerunning{PAWS: Preference Learning with Advantage-Weighted Segments}

\begin{document}

\twocolumn[
  \icmltitle{PAWS: Preference Learning with Advantage-Weighted Segments}

  % It is OKAY to include author information, even for blind submissions: the
  % style file will automatically remove it for you unless you've provided
  % the [accepted] option to the icml2026 package.

  % List of affiliations: The first argument should be a (short) identifier you
  % will use later to specify author affiliations Academic affiliations
  % should list Department, University, City, Region, Country Industry
  % affiliations should list Company, City, Region, Country

  % You can specify symbols, otherwise they are numbered in order. Ideally, you
  % should not use this facility. Affiliations will be numbered in order of
  % appearance and this is the preferred way.
  \icmlsetsymbol{equal}{*}

  \begin{icmlauthorlist}
    \icmlauthor{Aleksandar Taranovic}{ALR}
    \icmlauthor{Onur Celik}{ALR}
    \icmlauthor{Niklas Freymuth}{ALR}
    \icmlauthor{Ge Li}{ALR}
    \icmlauthor{Serge Thilges}{ALR}
    \icmlauthor{Huy Le}{ALR,Bosch}
    \icmlauthor{Tai Hoang}{ALR}
    \icmlauthor{Rania Rayyes}{IFL}
    \icmlauthor{Gerhard Neumann}{ALR}

    %\icmlauthor{}{sch}
    %\icmlauthor{}{sch}
  \end{icmlauthorlist}

  \icmlaffiliation{ALR}{Autonomous Learning Robots, Karlsruhe Institute of Technology, Karlsruhe, Germany}
  \icmlaffiliation{IFL}{Institute for Material Handling and Logistics (IFL), Karlsruhe Institute of Technology, Karlsruhe, Germany}
  \icmlaffiliation{Bosch}{Bosch Center for Artificial Intelligence, Renningen, Germany}

  \icmlcorrespondingauthor{Aleksandar Taranovic}{aleksandar.taranovic@kit.edu}

  % You may provide any keywords that you find helpful for describing your
  % paper; these are used to populate the "keywords" metadata in the PDF but
  % will not be shown in the document
  \icmlkeywords{Preference Learning, Reinforcement Learning from Human Feedback}

  \vskip 0.3in
]

% this must go after the closing bracket ] following \twocolumn[ ...

% This command actually creates the footnote in the first column listing the
% affiliations and the copyright notice. The command takes one argument, which
% is text to display at the start of the footnote. The \icmlEqualContribution
% command is standard text for equal contribution. Remove it (just {}) if you
% do not need this facility.

% Use ONE of the following lines. DO NOT remove the command.
% If you have no special notice, KEEP empty braces:
\printAffiliationsAndNotice{}  % no special notice (required even if empty)
% Or, if applicable, use the standard equal contribution text:
% \printAffiliationsAndNotice{\icmlEqualContribution}
\renewcommand{\thefootnote}{*}

\begin{abstract}
Preference-based reinforcement learning (PbRL) learns policies from human trajectory-level comparisons, avoiding explicit reward design and expert demonstrations. Existing methods typically train utility functions on trajectory or segment-level preferences while relying on per-step utility estimates during policy optimization. This training and inference mismatch induces a distribution shift that severely degrades temporal credit assignment and limits policy learning.
We analyze this issue and propose PAWS\footnote{The project webpage with code: \url{https://ataranovic.github.io/PAWS-webpage/}}, a segment-based preference learning method that performs policy updates directly using segment-level advantage functions. By aligning utility training with policy optimization, PAWS preserves trajectory-level preference information and avoids unreliable per-step learning signals. Experiments on simulated robotic manipulation and locomotion tasks demonstrate that PAWS consistently outperforms existing PbRL approaches, highlighting the importance of distribution-consistent preference learning.

\end{abstract}

\section{Introduction}\label{sec::introduction}

Preference-based reinforcement learning (PbRL) aims to learn policies that align with human preferences by leveraging pairwise comparisons between behaviors rather than explicit reward functions \citep{Wirth2017, Christiano2017DeepRL}. In this setting, a human evaluator is presented with two candidate behaviors and provides a binary label indicating which one is preferred under implicitly defined criteria. 
The resulting feedback is often easier and more reliable to provide than demonstrations or manually specified rewards, especially for complex or subjective tasks. 
This is particularly true when demonstration quality varies widely, e.g., teleoperation scenarios with operators of differing expertise.
\begin{figure}[t]
    \centering
    \includegraphics[width=\linewidth]{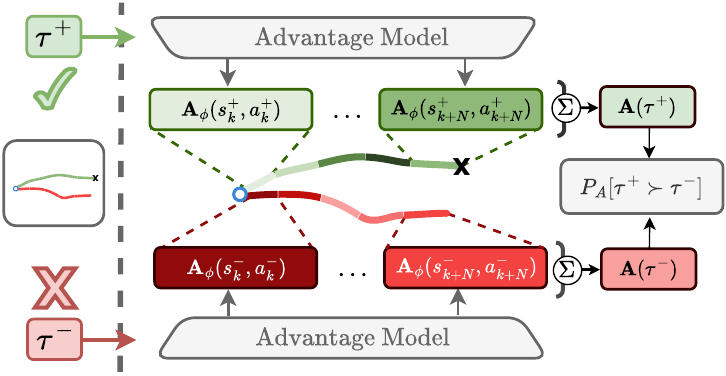}
    \vspace{-2mm}
    \caption{
    \textbf{The Temporal Credit Assignment Problem.}
    Illustration of learning a trajectory-level advantage value $A(\tau)$ from preference data. 
    The advantage model is trained on \textcolor{deepgreen}{preferred} ($\tau^+$) and \textcolor{deepred}{non-preferred} ($\tau^-$) trajectories using the loss $P_{A_\phi}[\tau^+ \succ \tau^-]$.
    This loss depends only on the sum $A_\phi(\tau) = \sum_k A_\phi(s_k, a_k)$ of per-step advantages, which constrains the model at the trajectory level. As a result, many different assignments of individual state-action advantages are consistent with the same preference label~(\cref{fig:trajectories}). During policy optimization, however, the advantage function is queried on individual state-action pairs~\citep{Christiano2017DeepRL,kim2023preference}, inducing a distribution shift between segment-level training and step-level inference.
    }
    \label{fig:adv_diagram}
    \vspace{-3mm}
\end{figure}

In sequential decision-making problems such as robotic control, preferences are typically expressed over trajectory segments, which are sequences of states or state-action pairs, rather than individual steps. Consequently, most PbRL methods train a utility function, such as a reward or advantage model, using segment-level or trajectory-level comparisons.
Policy optimization, however, is usually performed at the level of individual state-action pairs, which requires the learned utility to be inferred on single steps.

The mismatch between segment-level training and step-level inference induces a distribution shift that is largely overlooked in existing PbRL approaches.
\cref{fig:adv_diagram} illustrates how a utility model, in this case the advantage function, is trained. 
Although the utility model is trained to accurately rank complete segments, it is later used to evaluate individual state-action pairs. 
For example, in \cref{fig:trajectories}, we show four possible per-step utility distributions for the preferred and non-preferred segments. Here, a segment $\tau^{+/-}_i$ consists of five steps, with the utility of each step encoded by color intensity, where darker green/red shades indicate higher/lower utility values. The sum of utility of all steps in the segment is the same, but the utility of individual steps can vary significantly.
Thus, many distinct per-step utility assignments can explain the same segment-level preference, leaving temporal credit assignment fundamentally ambiguous. This ambiguity severely limits the ability of the utility model to provide meaningful learning signals during policy optimization, especially when the preferences span a wide range of behavior quality.

\begin{figure}[t]
    \centering
    \includegraphics[width=0.49\linewidth]{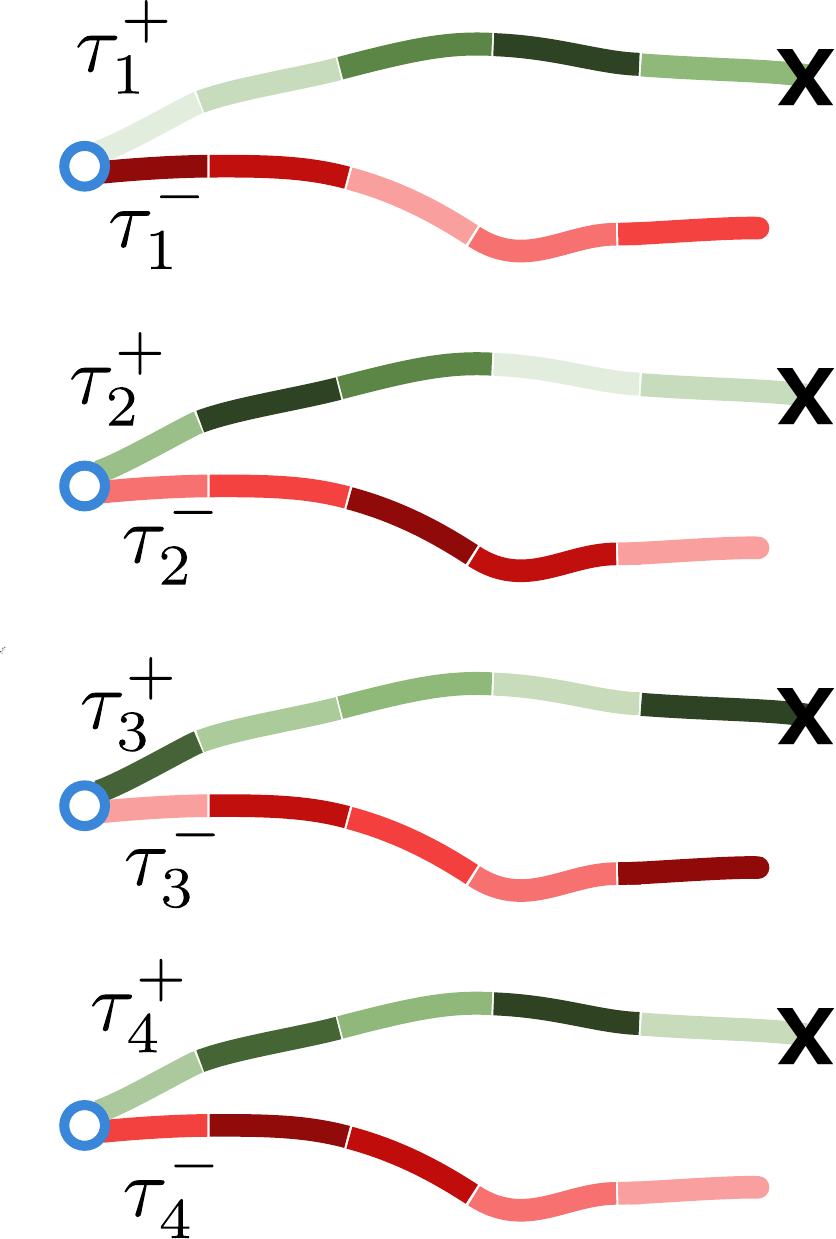}
    \hfill
    \includegraphics[width=0.49\linewidth]{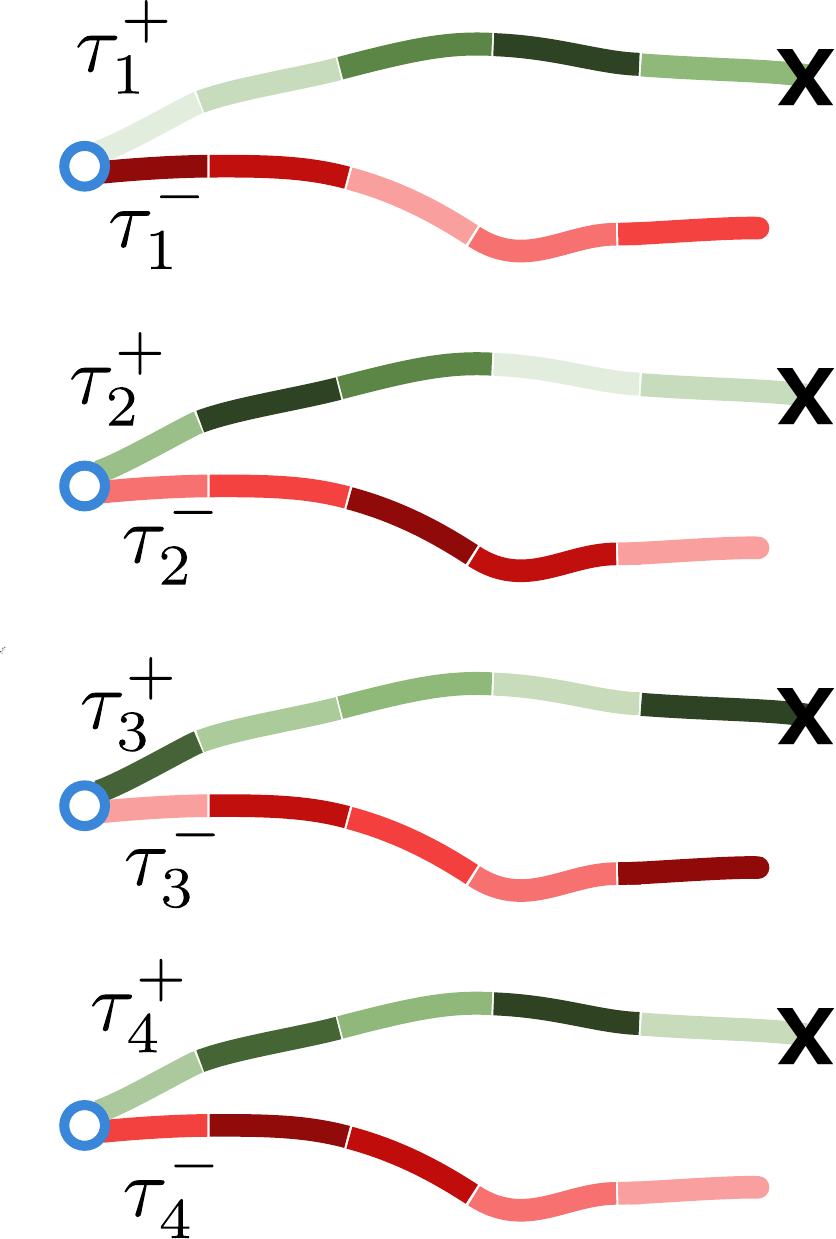}
    \caption{
    \textbf{Ambiguity in Per-Step Credit Assignment.}
    Four pairs of preferred ($\tau^+$) and non-preferred ($\tau^-$) segments are shown, with the intensity of green and red encoding the per-step advantage. The sum of advantages within each segment is identical across the two pairs, so they yield the same trajectory-level preference label, yet the per-step assignments differ markedly. This illustrates the underdetermined nature of temporal credit assignment under the distribution shift between segment-level training and step-level inference, where any consistent per-step assignment is equally compatible with the supervision signal, leaving downstream policy updates exposed to arbitrary choices made by the utility model.
    }
    \label{fig:trajectories}
    \vspace{-4mm}
\end{figure}

This issue is commonly referred to as the temporal credit assignment problem in preference learning \citep{Wirth2017}. Crucially, we argue that this problem does not merely arise from the absence of fine-grained labels, but from a distributional inconsistency between how utilities are trained and how they are used. As we empirically demonstrate, when the same learned utility is queried with segment-level inputs that match its training distribution, prediction quality improves substantially, and preference information becomes significantly more informative for temporal credit assignment.

To address this mismatch, we introduce Preference Learning with Advantage-Weighted Segments (PAWS).
PAWS performs policy optimization directly at the segment level by using a learned advantage function that is trained and queried consistently on trajectory segments. By combining segment-wise advantage estimation with a trust-region-constrained update, PAWS preserves trajectory-level preference information while avoiding unreliable single-step credit assignment. This enables stable and effective policy updates that remain close to the data distribution implied by the preferences. This formulation further enables a principled, data-driven control of policy update strength through the effective sample size of advantage-weighted segments, reducing reliance on manually tuned optimization hyperparameters.

Our contributions are fourfold.
First, we analyze temporal credit assignment in PbRL through the lens of training and inference distribution shift, identifying it as a core limitation of existing methods.
Second, we propose PAWS, a segment-based preference learning approach that aligns utility training with policy optimization, thereby enabling more reliable propagation of preference signals.
Third, we introduce an intuitive, data-driven strategy for setting policy optimization hyperparameters based on the effective sample size of preference-weighted data. 
Finally, we validate our method on a diverse set of simulated robotic manipulation and locomotion tasks, demonstrating consistent performance improvements over established baselines.

\section{Related Work}
\textbf{Preference-based Reinforcement Learning (PbRL).} PbRL leverages comparative feedback as the primary supervision signal~\citep{Wirth2017}. 
Some approaches integrate entropy-based regularization~\citep{pebble} or behavior cloning pretraining, where models are initialized using demonstrations of desired behavior. 
In LLM fine-tuning, a KL-divergence term is frequently introduced to anchor the trained policy to its pretrained counterpart~\citep{ziegler2019finetuning,stiennon2020learning,instruct_gpt,deepseek_r1_2025}. Online preference learning methods, such as those proposed by \citet{verma2024hindsight}, tackle the issue of credit assignment. However, they rely on online interaction and require substantially more data. \citet{gao2024hindsight} propose a similar offline preference learning approach, but it depends on massive amounts of unlabeled data.

\textbf{Direct preference learning.} Recent methods such as 
DPO~\citep{rafailov2023direct}, CPL~\citep{hejna2024contrastive}, IPL~\citep{hejna2023inverse}, DPPO~\citep{an2023direct}, and PPL~\citep{cho2025ppl} implicitly avoid the reward assignment problem.
Instead of learning a reward function, they explicitly optimize the policy likelihood from preferences. However, these methods do not utilize reward models to capture complex dependencies between different state-action pairs. 
As such, they struggle on tasks that require complex reasoning~\citep{ivison2024unpacking, xu2024dpo} and in settings of limited preference data.

\textbf{Feedback Collection.} The process of acquiring preference data varies significantly across methods. Certain frameworks gather feedback online, collecting trajectories or comparisons dynamically during training~\citep{pebble,Christiano2017DeepRL}, while others rely on static, pre-collected datasets~\citep{hejna2024contrastive}.
Preference query generation also varies between methods~\citep{lee2021bpref}. Our work focuses on offline datasets, though PAWS is agnostic to the data collection process and can also be applied to online settings.
A critical consideration in feedback collection is the quality of preference labels~\citep{lee2021bpref}. Many approaches address noisy or conflicting data by aggregating signals from multiple policies or annotators. We focus on informative preferences, where an oracle with access to expert metrics, such as ground-truth rewards or Q-functions, provides labels. Combining preferences with other feedback modalities, such as demonstrations, is an open challenge that has been tackled in several recent works~\citep{biyik2021learning,taranovic2023adversarial,ibarz2018reward}. Similarly, several recent works consider alternative feedback types~\citep{Abdolmaleki2025Learning, myers2021learning}.

\textbf{Reward Function Learning.} The goal in PbRL is to train a reward \citep{ng2000algorithms} or a utility-based value function model~\citep{knox2024models} that aligns with the collected preferences.
The Bradley–Terry (BT) model~\citep{BradleyTerry1952} serves as a foundational approach for many methods. Here, human preference is modeled as a Boltzmann rational distribution~\citep{baker2009action}
over the sum of the discounted rewards of individual time steps. 
However, recent research suggests that human preferences are better captured through regret-based formulations~\citep{knox2024models} where the advantage of the optimal policy is used to model the quality of a segment instead of the rewards.
Most BT-inspired methods optimize log-likelihood objectives, or use logistic separation and hinge loss variants~\citep{taranovic2023adversarial}. 
To better reflect the complexity of human preference structures, reward models are often modeled with a Transformer-based architecture~\citep{nakano2021webgpt,gao2023scaling, zhao2024prefmmt} capable of learning non-Markovian rewards~\citep{kim2023preference}.
When feedback is collected online, the reward model must be continually updated to incorporate new data~\citep{taranovic2023adversarial,pebble}.

\textbf{Policy Optimization.} The choice of the RL method for policy optimization often depends on the data collection paradigm.
For online settings, trust-region-constrained on-policy approaches~\citep{schulman2015trust,schulman2017proximal,abdolmaleki2018maximum, li2024open,otto_iclr2021,hoang2025geometryaware} are commonly used, while off-policy algorithms~\citep{haarnoja2018soft} are typically combined with relabeled rewards~\citep{pebble}. In offline scenarios, algorithms like IQL~\citep{kostrikov2021offline} are frequently adopted to handle the constraints of static datasets.
In the reward-weighted regression setup~\citep{peters2007reinforcement, neumann2008fitted} and related Expectation Maximization (EM)-like approaches~\citep{abdolmaleki2018maximum}, the constrained policy optimization usually leads to a weighted maximum likelihood optimization that avoids querying the credit assignment out of distribution~\citep{kostrikov2021offline, nair2020awac}. 
Our method diverges from conventional PbRL methods by directly learning the advantage function, removing the need for separate reward training. 
Additionally, we propose updating the policy on segments instead of single state-action pairs, similar to some policy optimization methods in episodic RL~\citep{kober2008policy, abdolmaleki2015model, daniel2016hierarchical, li2024open, li2025top}. 
We base the policy update scheme on the EM-like, trust-region-constrained optimization problem~\citep{peters2010relative}, but adapt it to the offline setting.

\section{Method}
\label{sec:method}

We present PAWS, a preference learning method designed to address the distribution shift between utility training and policy optimization in preference-based reinforcement learning. Given previously collected preference data over trajectory segments, we first train an advantage function $A_\phi$. We then optimize the policy by performing updates directly on trajectory segments, using advantage-weighted segment returns rather than per-step utility estimates.

\textbf{Notation.}
We consider a policy $\pi_\theta: \mathcal{S} \times \mathcal{A} \rightarrow \mathbb{R}^+$ parameterized by $\theta$, where $\mathcal{S}$ and $\mathcal{A}$ denote the state and action spaces. The policy induces a distribution over actions conditioned on states. Preference learning is performed by training an advantage function $A_\phi: \mathcal{S} \times \mathcal{A} \rightarrow \mathbb{R}$ with parameters $\phi$, which assigns a scalar utility to state-action pairs.
We assume access to trajectories $\tau_T^i = (s_0^i, a_0^i, \ldots, s_{T-1}^i, a_{T-1}^i)$ of length $T$. Since human preferences are hard to elicit reliably over long trajectories, we operate on fixed-length trajectory segments, which are short enough for consistent comparison while still capturing temporally extended behavior. A segment of length $N$ starting at time step $k$ is denoted as
$\tau_{k:N}^i = (s_k^i, a_k^i, \ldots, s_{k+N-1}^i, a_{k+N-1}^i)$.
Segments are sampled from an unknown data distribution $p_D(\tau)$, from which we assume access to $K$ segment samples. When temporal indexing is not essential, we denote a generic segment simply as $\tau_i$.
Preference data consist of pairs of trajectory segments, where $\tau_i^+$ is preferred over $\tau_i^-$. The preference dataset is denoted as
$
D_{\mathrm{pref}} = \{(\tau_i^+, \tau_i^-)\}_{i=1}^n,
$
and we use $\succ$ to denote the preference relation.

\subsection{Advantage Learning}
\label{sec::adv_learning}

Following \citet{knox2024models}, we model human preferences using an advantage function rather than partial returns, which better reflects how humans evaluate temporally extended behavior. The advantage function is trained on segment-level comparisons, and the likelihood of preferring segment $\tau^+$ over $\tau^-$ is defined as
\begin{align}
P_{A_\phi}[\tau^+ \succ \tau^-] =
\frac{\exp\left( A_\phi(\tau^+)\right)}
{\exp\left(A_\phi(\tau^+)\right)
+ \exp\left(A_\phi(\tau^-)\right)},
\end{align}
where
$A_\phi(\tau) = \sum_{t} A_\phi(s_t, a_t)$.
The advantage values of both segments are aggregated, and training reduces to binary cross-entropy optimization over the difference in cumulative advantage predictions. The resulting loss is
\begin{equation}\label{eq::AdvOpt}
\mathcal{L}_{\mathrm{pref}}(\phi)
= -\frac{1}{n} \sum_{i=1}^n
\log \sigma\left(A_\phi(\tau_i^+) - A_\phi(\tau_i^-)\right).
\end{equation}
Importantly, this objective constrains the advantage function only at the segment level. While $A_\phi$ is trained to correctly rank trajectory segments, many different per-step advantage assignments can yield identical segment-level sums, as shown in \cref{fig:trajectories}. 
Thus, the learned advantage function is underconstrained at the level of individual state-action pairs.
Most existing PbRL methods subsequently query this advantage or reward model at the per-step level during policy optimization, thereby inducing a distribution shift between the segment-level training and the step-level inference. This mismatch leads to ambiguous and unreliable temporal credit assignment. In contrast, PAWS preserves distributional consistency by using the learned advantage function directly on trajectory segments during policy optimization.

We consider two architectures for $A_\phi$: an encoder-only Transformer \citep{vaswani2017attention} and a simple multilayer perceptron (MLP). For the Transformer, each state-action pair in a segment is encoded and passed through an MLP head to produce per-step advantage values $A_\phi(s_k, a_k), \ldots, A_\phi(s_{k+N-1}, a_{k+N-1})$, which are then aggregated via summation to obtain $A_\phi(\tau)$. The MLP directly maps individual state-action pairs to scalar advantage values.
To prevent overfitting, we employ early stopping based on validation accuracy or terminate training once a predefined maximum number of update steps is reached. \cref{sec::evaluations} compares the performance of both architectures.

\subsection{Policy Update}

We use the advantage function $A_\phi(\tau)$ to first extract an optimal 
segment distribution $p^*(\tau)$ such that segments with higher advantage have higher likelihood while staying close to the data distribution. Subsequently, we infer the policy $\pi(a|s)$ from $p^*(\tau)$ using a maximum likelihood fitting step. 

\textbf{Obtaining the segment distribution.}
We optimize the segment distribution using a constrained optimization problem in the segment space $\tau$, 
\begin{equation}
\label{eq::PolicyOpt}
\begin{aligned}
&\max_p \int p(\tau) A_\phi(\tau) d\tau \\
&\text{s.t.}~~ \text{KL}\left(p(\tau) \,||\, p_D(\tau)\right) \leq \epsilon 
\text{ and } \int p(\tau) d\tau = 1.
\end{aligned}
\end{equation}
This optimization problem has two properties that are crucial in our setting. 
First, it includes a trust-region that constrains the new segment distribution to be close to the data distribution $p_D$, and second, finding a new segment distribution is a pure offline process and does not involve generating new segments using the policy. 
Both benefits are important in the offline policy optimization step to avoid out-of-distribution samples~\citep{kostrikov2021offline, nair2020awac}.
Here, we leverage this formalism to extract a reweighted segment distribution using preference data only.
The derivations in this section adapt the relative entropy policy search framework of \citet{peters2010relative} to segment-wise updates in the offline preference setting.
Using Lagrangian optimization \citep{boyd2004convex}, we can obtain the optimal solution to the optimization problem in~\cref{eq::PolicyOpt} that we summarize in the following proposition.
\begin{proposition} [Optimal Segment Distribution]
\label{prop::opt_sol}
    \citep{peters2010relative} Given an advantage function \mbox{$A_\phi(\tau)=\sum_t A_\phi(s_t,a_t)$}, samples from a preference distribution $\tau\sim p_D(\tau)$ and a fixed Lagrange multiplier $\lambda$, the optimal policy to the optimization problem in~\cref{eq::PolicyOpt} is 
    \begin{align}\label{eq::opt_sol}
        p^*(\tau) \propto p_D(\tau)\exp{\left(\frac{1}{\lambda}A_\phi(\tau)\right)}.
    \end{align}
\end{proposition}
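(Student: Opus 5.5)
We derive the closed form by standard Lagrangian duality, exactly as in relative entropy policy search, treating $p(\tau)$ as the optimization variable over segment space. We introduce a multiplier $\lambda \ge 0$ for the KL trust-region constraint and $\eta$ for the normalization constraint. The Lagrangian is
\begin{equation*}
\mathcal{L}(p,\lambda,\eta) = \int p(\tau) A_\phi(\tau)\, d\tau + \lambda\Big(\epsilon - \int p(\tau)\log\frac{p(\tau)}{p_D(\tau)}\, d\tau\Big) + \eta\Big(1 - \int p(\tau)\, d\tau\Big).
\end{equation*}
Because the objective is linear in $p$ and the KL term is convex in $p$, the problem is convex with affine equality constraint. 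Slater's condition holds whenever $\epsilon>0$, since $p=p_D$ is strictly feasible. Hence strong duality applies, and the KKT conditions characterize the optimum.

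The key step is the pointwise stationarity condition. We take the functional derivative of $\mathcal{L}$ with respect to $p(\tau)$ and set it to zero:
\begin{equation*}
A_\phi(\tau) - \lambda\Big(\log\frac{p(\tau)}{p_D(\tau)} + 1\Big) - \eta = 0.
\end{equation*}
For fixed $\lambda>0$, solving for $p$ gives
\begin{equation*}
p(\tau) = p_D(\tau)\exp\Big(\tfrac{1}{\lambda}A_\phi(\tau)\Big)\exp\Big(-\tfrac{\lambda+\eta}{\lambda}\Big).
\end{equation*}
The last factor does not depend on $\tau$, so it is absorbed by the normalization constraint. This fixes $\eta$ through
\begin{equation*}
\exp\big((\lambda+\eta)/\lambda\big) = \int p_D(\tau)\exp\big(A_\phi(\tau)/\lambda\big)\, d\tau,
\end{equation*}
and yields exactly the claimed form $p^*(\tau)\propto p_D(\tau)\exp(A_\phi(\tau)/\lambda)$. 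Since $A_\phi(\tau)=\sum_t A_\phi(s_t,a_t)$ enters only as a scalar function of $\tau$, its additive decomposition plays no role in this step.

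The main subtlety is justifying the optimization over densities and confirming that the stationary point is a maximum rather than merely a critical point. For this we argue via convexity: for fixed multipliers, $\mathcal{L}$ is strictly concave in $p$ because $-\lambda\,\mathrm{KL}$ is strictly concave for $\lambda>0$. The stationary point is therefore the unique maximizer of the Lagrangian. We also need the normalizer $\int p_D\exp(A_\phi/\lambda)\, d\tau$ to be finite. This holds trivially in our setting, since $p_D$ is represented by $K$ samples (an empirical distribution) and $A_\phi$ is bounded on them. The resulting $p^*$ is automatically nonnegative, so the positivity constraint is inactive.

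Finally, we remark that $\lambda$ is the one remaining free quantity. The proposition takes it as fixed; in general it is obtained by minimizing the dual $g(\lambda)=\lambda\epsilon+\lambda\log\int p_D(\tau)\exp(A_\phi(\tau)/\lambda)\, d\tau$. With $K$ samples, this integral becomes the sample average $\frac{1}{K}\sum_i \exp(A_\phi(\tau_i)/\lambda)$. As a consequence, $p^*$ is represented as normalized weights $w_i\propto\exp(A_\phi(\tau_i)/\lambda)$ on the data segments.
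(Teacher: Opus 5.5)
Your proposal is correct and follows essentially the same route as the paper's appendix proof: the same Lagrangian with multipliers for the KL and normalization constraints, the same pointwise stationarity condition $A_\phi(\tau)-\lambda\log p(\tau)+\lambda\log p_D(\tau)-\lambda-\beta=0$, and absorption of the $\tau$-independent factor into the normalizer (the paper does this when it eliminates $\beta$ in the dual derivation). Your remarks on convexity, Slater's condition, strict concavity of the Lagrangian, and finiteness of the normalizer add rigor that the paper's bare stationary-point argument leaves implicit.
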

Proposition \ref{prop::opt_sol} shows that the new segment distribution $p^*$ is proportional to the preference distribution $p_D$, but reweighted with the exponential of the advantage estimate of the segment. 
Intuitively, it increases the likelihood of the data distribution where the advantage is high.
In \cref{appdx::derivation_opt_distr}, we provide the detailed derivation for the segment-based case as considered here.

\textbf{Extracting the policy from the segment distribution.}
We can obtain a policy $\pi_\theta$ by fitting the segment distribution 
\begin{align}
    p_{\theta}(\tau) = p_D(s_0) \prod_{t=0}^{N-1} p(s_{t+1}|s_t, a_t) \pi_{\theta}(a_t|s_t) \label{eq::segment_prob_def}
\end{align} 
induced by policy $\pi_\theta$ to the desired segment distribution $p^*(\tau)$. This can be achieved by minimizing the KL-projection 
$$\theta^* = \operatorname{argmin}_{ \theta} \quad \textrm{KL}(p^*(\tau) \,\|\, p_{\theta}(\tau)).$$
After simplifying and re-arranging terms (see \cref{appdx::derivation_weighted_mle}), we can realize that minimizing the projection is equivalent to maximizing the following objective 
\begin{align}\label{eq::ml_loss}
    \mathcal{L}(\theta) = \mathbb{E}_{p_D}\left[w(\tau)\log p_\theta(\tau)\right], 
\end{align}
where the importance weights 
\begin{equation}\label{eq::ImpWeightDef}
    w(\tau) = \frac{p^*(\tau)}{p_D(\tau)}\propto\exp{\left(\frac{1}{\lambda}A_\phi(\tau)\right)}
\end{equation} are given by the exponentially transformed segment advantages.
We can further simplify this optimization by using~\cref{eq::segment_prob_def} to factorize $p_\theta(\tau)$ and, as the $\log$ turns the product into sums, removing all terms that do not depend on $\theta$. Hence, we arrive at the following weighted maximum likelihood problem
\begin{equation}\label{eq::grad_ml_loss_sa}
        \begin{split}
        \mathcal{L}(\theta) = \sum_{\tau \in D} \sum_{(s_t,a_t)\in \tau} & \exp\bigg(\frac{A_\phi(\tau)}{\lambda}\bigg) 
         \log\pi_\theta(a_t|s_t).
        \end{split}
    \end{equation}
Similar optimization schemes for obtaining a parametric policy were suggested in online and offline RL algorithms, for example, by \citet{nair2020awac, abdolmaleki2018maximum}. However, these schemes typically use weightings that are computed from single state-action pairs instead of advantages of whole segments. 

\subsection{Computing the Lagrange Multiplier \texorpdfstring{$\lambda$}{lambda}} \label{sec::opt_lambda}
Manually setting the Lagrange multiplier $\lambda$ can be difficult, because for too high $\lambda$ values, the importance weights in~\cref{eq::ImpWeightDef} become uniform, forcing the optimization to consider all data points, including undesired bad examples.
Too small $\lambda$ values exploit data points with high advantage, such that the effectively used number of data points collapses to only a few.

The Lagrange multiplier $\lambda$ directly depends on the parameter $\epsilon$ that upper bounds how much the policy $\pi$ is allowed to deviate from the preference distribution $p_D$.
Finding a good value for $\lambda$ is therefore crucial to balance the exploitation of data points with high advantage values, especially in scenarios, where data is scarce and the advantage function $A_\phi(\tau)$ might not provide accurate estimates.
We follow prior works \citep{peters2010relative, daniel2016hierarchical} and optimize for $\lambda$ by minimizing the dual function as stated in the next proposition. 
\begin{proposition}[Optimal Lagrange multiplier] \citep{peters2010relative}
    Minimizing the dual function 
    \begin{align}\label{eq::dual}
        g(\lambda) = \lambda\epsilon +\lambda\log \int p_D(\tau)\exp{\left(\frac{A_\phi(\tau)}{\lambda}\right)}d\tau 
    \end{align}
    yields the optimal Lagrange multiplier $\lambda^*$.
\end{proposition}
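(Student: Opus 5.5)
We derive the dual of the constrained problem in \cref{eq::PolicyOpt} with standard Lagrangian duality and substitute the closed-form primal solution of Proposition~\ref{prop::opt_sol}. First we form the Lagrangian with a multiplier $\lambda \ge 0$ for the KL trust region and $\eta$ for the normalization constraint:
\begin{align*}
\mathcal{L}(p,\lambda,\eta) = \int p(\tau) A_\phi(\tau)\, d\tau + \lambda\Big(\epsilon - \int p(\tau)\log\frac{p(\tau)}{p_D(\tau)}d\tau\Big) + \eta\Big(1-\int p(\tau)d\tau\Big).
\end{align*}
Setting the functional derivative with respect to $p(\tau)$ to zero gives $A_\phi(\tau) - \lambda\log\frac{p(\tau)}{p_D(\tau)} - \lambda - \eta = 0$. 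Solving, we get $p(\tau) = p_D(\tau)\exp\big(A_\phi(\tau)/\lambda\big)\exp\big(-(\lambda+\eta)/\lambda\big)$, which is the form in \cref{eq::opt_sol}.

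Next we eliminate $\eta$ using the normalization constraint. This yields
\begin{align*}
\frac{\lambda+\eta}{\lambda} = \log\int p_D(\tau)\exp\big(A_\phi(\tau)/\lambda\big)d\tau .
\end{align*}
We substitute this optimal $p$ back into $\mathcal{L}$. On the stationary solution, $\lambda\log\frac{p}{p_D} = A_\phi(\tau) - \lambda - \eta$. Hence the expected advantage term cancels against the KL term, and what remains is $\lambda\epsilon + \lambda + \eta$, using $\int p = 1$. Inserting the expression for $\eta$ gives exactly $g(\lambda) = \lambda\epsilon + \lambda\log\int p_D(\tau)\exp(A_\phi(\tau)/\lambda)d\tau$, as in \cref{eq::dual}.

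Finally, we argue that minimizing $g$ over $\lambda>0$ gives the optimal multiplier. The primal problem maximizes a linear objective over a convex set, since the KL divergence is convex in $p$. Slater's condition holds whenever $\epsilon>0$, because $p=p_D$ is strictly feasible. Strong duality therefore holds, and the optimal $\lambda^*$ is the minimizer of the dual function. We would also note that $g$ is convex in $\lambda$: it is $\lambda\epsilon$ plus the perspective of the convex log-partition function $\log\int p_D e^{A}$. This makes the one-dimensional minimization well posed.

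In practice the integral is replaced by the empirical average over the $K$ segment samples. The main obstacle is rigor in the infinite-dimensional setting. This includes justifying the functional-derivative stationarity, ensuring $\int p_D e^{A_\phi/\lambda}<\infty$, and applying Slater-type strong duality for distributions. We would handle these by citing the REPS derivation of \citet{peters2010relative}, and by working with the empirical (finite-sample) distribution, where everything reduces to finite-dimensional convex optimization.
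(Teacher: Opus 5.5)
Your derivation is correct and follows the paper's appendix proof essentially step for step. It uses the same Lagrangian (your $\eta$ is the paper's $\beta$), the same stationarity condition, eliminates the normalization multiplier via $\int p^*(\tau)\,d\tau = 1$, and back-substitutes to get $\lambda + \lambda\epsilon + \eta$, which reduces to $g(\lambda)$. The paper stops once $g$ is derived and leaves the ``minimize over $\lambda$'' step implicit via REPS, so your Slater/strong-duality and convexity-of-$g$ remarks are a sound addition rather than a different route.
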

\cref{appdx::derivation_dual_fct} provides a detailed derivation.
The expectation over $p_D(\tau)$ in the dual function can be easily approximated via a Monte Carlo estimate using the available samples.

The dual function optimization directly connects the choice of $\lambda$ with the KL-bound $\epsilon$.
However, tuning $\epsilon$ can often be problem-specific as the KL depends on the problem's action dimensionality. 
Moreover, good KL values typically depend on the dataset size, where larger datasets typically allow for larger step sizes and thus KL values than small datasets. 
Hence, instead of tuning the KL bound $\epsilon$ by hand, we propose to automatically adapt $\epsilon$ based on the desired number of effective samples
\begin{align}\label{eq::n_eff}
    n_{\mathrm{eff}} = \frac{\left(\sum_iw_i\right)^2}{\sum_iw_i^2},
    w_i= \exp\left(\frac{1}{\lambda}\sum_t  A_\phi(s^i_t,a^i_t)\right),
\end{align}
which is measured based on the importance weights $w_i$ and therefore depends on $\lambda$.
For a desired value $n^*_{\mathrm{eff}}$ we can automatically find the corresponding $\epsilon$ through an iterative process, as shown in \cref{alg:offline-paws}.
Choosing a value for $n^*_{\mathrm{eff}}$ is often more intuitive than defining $\epsilon$, as the number of effective samples corresponds to the number of samples that are effectively used for updating the policy parameters $\theta$. 
For example, a value of $n^*_{\mathrm{eff}}=10\%$ for a dataset with $500$ preferences would mean that segments from $50$ preferences meaningfully contribute to each policy update.
This interpretation is more straightforward than defining an upper bound on the trust region $\epsilon$ of the KL constraint, whose value depends on the dimensionality of the action space. 

\section{Evaluations}\label{sec::evaluations}

\begin{table*}[t]
\centering
\small
\setlength\tabcolsep{2pt}
\caption{\textbf{Meta-World:} Task success (\%) $\pm$ 2SE. Best results for $n=50$ and $n=500$ are highlighted with \textbf{\textcolor{groupone}{orange}} and \textbf{\textcolor{grouptwo}{teal}}, respectively. The last two rows report the average performance across tasks and the relative improvement (\%) over Behavior Cloning (BC).}
\label{tab:sr_metaworld_transposed}
\begin{adjustbox}{max width=\textwidth}
\footnotesize
\renewcommand{\arraystretch}{1.4}
\begin{tabular}{lccccccccccccccccc}
\toprule
\textbf{Task} & \multicolumn{2}{c}{\textbf{BC}} & \multicolumn{2}{c}{\textbf{P-IQL}} & \multicolumn{2}{c}{\textbf{CPL}} & \multicolumn{2}{c}{\textbf{CPL+KL}} & \multicolumn{2}{c}{\textbf{Pref Trans.}} & \multicolumn{2}{c}{\textbf{IPL}} & \multicolumn{2}{c}{\textbf{PAWS (Trans.)}} & \multicolumn{2}{c}{\textbf{PAWS (MLP)}} \\
\cmidrule(lr){2-3}\cmidrule(lr){4-5}\cmidrule(lr){6-7}\cmidrule(lr){8-9}\cmidrule(lr){10-11}\cmidrule(lr){12-13}\cmidrule(lr){14-15}\cmidrule(lr){16-17}
\textit{\#Preferences} & \textit{50} & \textit{500} & \textit{50} & \textit{500} & \textit{50} & \textit{500} & \textit{50} & \textit{500} & \textit{50} & \textit{500} & \textit{50} & \textit{500} & \textit{50} & \textit{500} & \textit{50} & \textit{500} \\
\midrule
Button Press & 69$\pm$3 & 67$\pm$3 & 72$\pm$6 & 77$\pm$5 & 70$\pm$8 & \bestgrouptwo{87$\pm$5} & 67$\pm$7 & 86$\pm$3 & 71$\pm$5 & 77$\pm$2 & 66$\pm$8 & 69$\pm$4 & 80$\pm$5 & 84$\pm$4 & \bestgroupone{82$\pm$6} & 82$\pm$4 \\
Door Open & 48$\pm$6 & 52$\pm$3 & 52$\pm$8 & 83$\pm$3 & 36$\pm$9 & 71$\pm$9 & 44$\pm$11 & 77$\pm$6 & 62$\pm$6 & 87$\pm$2 & 47$\pm$8 & 59$\pm$4 & \bestgroupone{70$\pm$8} & 96$\pm$1 & 65$\pm$15 & \bestgrouptwo{98$\pm$1} \\
Drawer Open & 54$\pm$5 & 60$\pm$6 & \bestgroupone{58$\pm$6} & 71$\pm$2 & 53$\pm$5 & \bestgrouptwo{79$\pm$3} & 56$\pm$7 & 78$\pm$5 & 45$\pm$3 & 71$\pm$1 & 44$\pm$5 & 55$\pm$3 & 44$\pm$11 & 74$\pm$3 & 40$\pm$12 & 75$\pm$3 \\
Faucet Close & 51$\pm$6 & 66$\pm$3 & 59$\pm$7 & 80$\pm$3 & 53$\pm$7 & 64$\pm$4 & 48$\pm$6 & 63$\pm$3 & 59$\pm$3 & 85$\pm$2 & 54$\pm$6 & 67$\pm$4 & 67$\pm$10 & \bestgrouptwo{87$\pm$3} & \bestgroupone{68$\pm$8} & \bestgrouptwo{87$\pm$3} \\
Lever Pull & \bestgroupone{36$\pm$7} & 44$\pm$3 & 31$\pm$5 & 38$\pm$4 & 28$\pm$8 & 46$\pm$3 & 29$\pm$8 & 47$\pm$4 & 34$\pm$3 & 47$\pm$2 & 35$\pm$4 & 54$\pm$3 & 28$\pm$5 & \bestgrouptwo{58$\pm$5} & 30$\pm$12 & 55$\pm$4 \\
Peg Insert Side & \bestgroupone{33$\pm$7} & 48$\pm$3 & 31$\pm$6 & 78$\pm$5 & 32$\pm$6 & 68$\pm$4 & 27$\pm$8 & 67$\pm$4 & \bestgroupone{33$\pm$3} & 80$\pm$1 & 29$\pm$5 & 49$\pm$3 & 24$\pm$7 & 81$\pm$4 & 23$\pm$9 & \bestgrouptwo{82$\pm$3} \\
Plate Slide & 47$\pm$5 & 50$\pm$6 & 50$\pm$4 & 74$\pm$7 & 42$\pm$5 & 65$\pm$3 & 42$\pm$7 & 65$\pm$5 & \bestgroupone{55$\pm$3} & 78$\pm$2 & 54$\pm$8 & 56$\pm$3 & 48$\pm$6 & 74$\pm$5 & 49$\pm$9 & \bestgrouptwo{78$\pm$5} \\
Push Back & 25$\pm$4 & 37$\pm$2 & 23$\pm$5 & 43$\pm$3 & 25$\pm$7 & 45$\pm$2 & 18$\pm$5 & 43$\pm$4 & 25$\pm$2 & 48$\pm$2 & \bestgroupone{26$\pm$4} & 38$\pm$2 & \bestgroupone{26$\pm$5} & \bestgrouptwo{56$\pm$3} & 24$\pm$4 & 53$\pm$3 \\
Sweep Into & 30$\pm$6 & 58$\pm$3 & 35$\pm$5 & 66$\pm$5 & 26$\pm$9 & 66$\pm$5 & 31$\pm$6 & 64$\pm$6 & 31$\pm$2 & 66$\pm$2 & 31$\pm$5 & 57$\pm$2 & 35$\pm$8 & \bestgrouptwo{74$\pm$3} & \bestgroupone{36$\pm$9} & \bestgrouptwo{74$\pm$4} \\
Window Close & 69$\pm$8 & 91$\pm$2 & 78$\pm$9 & 97$\pm$1 & 64$\pm$7 & 85$\pm$5 & 59$\pm$8 & 83$\pm$4 & 86$\pm$0 & 96$\pm$1 & 71$\pm$6 & 94$\pm$2 & \bestgroupone{94$\pm$5} & 98$\pm$1 & 91$\pm$5 & \bestgrouptwo{99$\pm$0} \\
\midrule
Avg. over tasks     & 46.2      & 57.3      & 48.9      & 70.7      & 42.9      & 67.6      & 42.1      & 67.3      & 50.1      & 73.5      & 45.7      & 59.8      & \bestgroupone{51.6}      & {78.2}   & 50.8        & \bestgrouptwo{78.3}        \\
\midrule
Improvement (\%) & 0.0 & 0.0 & 5.8 & 23.4 & -7.1 & 18.0 & -8.9 & 17.5 & 8.4 & 28.3 & -1.1 & 4.4 & \bestgroupone{11.7} & {36.5} & 10.0 & \bestgrouptwo{36.6} \\
\bottomrule
\end{tabular}
\end{adjustbox}
\end{table*}

We analyze the learning behavior of PAWS using two different parameterizations for the advantage function, a Transformer-based and MLP-based advantage function, which we denote as PAWS (Trans.) and PAWS (MLP), respectively.
As baselines, we evaluate Behavior Cloning (BC), P-IQL, CPL~\citep{hejna2024contrastive}, CPL+KL, which is related to PPL~\citep{cho2025ppl}, Preference Transformer~\citep{kim2023preference}, and IPL~\citep{hejna2023inverse}. IQL~\citep{kostrikov2021offline} has been used to optimize a utility function in the PbRL setting in several recent works~\citep{hejna2024contrastive,kim2023preference} using slightly different objectives. In our case, we implement it using the advantage as the target~\citep{hejna2024contrastive}. Following previous naming conventions, we call the resulting baseline P-IQL.  We additionally evaluate DPPO~\citep{an2023direct}, which is designed to rely on a large unlabeled trajectory dataset alongside preferences. Without this auxiliary dataset, which is not available in our offline setup, DPPO underperforms all other baselines on every task we evaluated, achieving e.g.\ only $9\%$ success on Sweep Into and $1\%$ on Peg Insert Side with 500 preferences. We therefore relegate the full DPPO results to \cref{appdx:DPPO} and exclude DPPO from the main tables for clarity.
We further compare performance under two preference budgets, namely $50$ and $500$ preferences, sampled from the same larger dataset. 
We follow the sparse sampling setup proposed in CPL~\citep{hejna2024contrastive}. From a pool of $K$ segments, we form each preference pair by drawing one segment from the first half and one from the second half.
Our evaluations are performed on 10 different Meta-World tasks~\citep{yu2020meta} and four locomotion tasks from D4RL~\citep{fu2020d4rl,towers2024gymnasium}.
More details about these tasks are provided in \cref{appdx:PreferenceDataset}.

For each task, we construct a dataset containing an equal number of samples collected from four policies of varying quality. This data generation procedure more realistically reflects differences in human expertise, as commonly observed in settings such as teleoperation or human-in-the-loop control. In contrast, several baseline methods \citep{hejna2024contrastive,cho2025ppl} generate suboptimal behavior by injecting uncorrelated Gaussian noise into expert actions, an assumption that is unlikely to capture structured deviations arising from lower skill levels. To obtain policies of different quality, we train an expert policy using Soft Actor-Critic (SAC) \citep{haarnoja2018soft} and periodically save checkpoints throughout training. From these checkpoints, we automatically select four policies whose performance spans a broad range, from poor to high-quality behavior. A detailed performance evaluation of the resulting policies is provided in \cref{tab:rollout_metaworld} in \cref{appdx:PreferenceDataset}.

Preference labels are generated using the highest-performing policy as a proxy for an expert annotator. We derive preferences from the policy’s log action probabilities. Under SAC, the optimal policy follows a Boltzmann distribution with respect to the advantage function, such that log probabilities correspond to advantage values~\citep{haarnoja2017reinforcement}. This induces a ranking based on relative action quality rather than absolute Q-values, which better reflects how humans compare behaviors. Consequently, log probabilities provide a principled and efficient surrogate for expert preference labels.

\textbf{Meta-World Tasks.}
We modified the Meta-World tasks in two ways, as proposed in our baseline~\citep{hejna2024contrastive}. Namely, we randomize the initial hand position and also remove proprioceptive history from the observation. The success rates on Meta-World tasks are reported in \cref{tab:sr_metaworld_transposed}, and the mean returns are provided in \cref{tab:transposed_performance} in \cref{appdx:Average returns}. Our evaluation methodology follows CPL. Specifically, we execute 25 rollouts per evaluation point, smooth the results by averaging over eight adjacent points, and report the peak performance.
All experiments are conducted over 10 random seeds, and we report the mean along with two times the standard error. All methods use the same MLP policy architecture; for PAWS, we fix the effective sample size to \( n_{\mathrm{eff}} = 10\% \) across all evaluations. For the baselines, we use the implementations and recommended hyperparameters provided by \citet{hejna2024contrastive} (see \cref{app:hp}).
Overall, our method achieves higher average success rates and higher mean returns across tasks (\cref{tab:transposed_performance}). In the low-data regime with only 50 preference queries, several baseline methods degrade in performance and, in some cases, perform worse than behavior cloning. In contrast, PAWS outperforms behavior cloning and all other baselines in both low- and high-data settings in the majority of the tasks.

\textbf{Locomotion Tasks.}
We evaluate four different locomotion tasks, namely Ant, HalfCheetah, Hopper, and Walker2d. The average returns from 25 rollouts over 10 seeds are presented in \cref{tab:Mujoco}. The evaluation procedure is the same as for the Meta-World tasks. Moreover, we use the same hyperparameters and policy network size, and we use $n_{\mathrm{eff}}=30\%$ for PAWS for all evaluations. On all locomotion tasks, PAWS achieves higher returns than all of the baselines for both preference budgets.

\begin{table*}[t]
\centering
\small
\setlength\tabcolsep{2pt}
\caption{\textbf{Locomotion Tasks:} Average episode returns over 25 rollouts and 10 random seeds $\pm$ 2SE. Best results for $n{=}50$ and $n{=}500$ preferences are highlighted with \textbf{\textcolor{groupone}{orange}} and \textbf{\textcolor{grouptwo}{teal}}, respectively. Both PAWS variants outperform all baselines across all four locomotion tasks and both preference budgets. The same hyperparameters and policy architecture are used for all methods, with $n^*_{\mathrm{eff}}=30\%$ for PAWS.}
\label{tab:Mujoco}
\begin{adjustbox}{max width=\textwidth}
\footnotesize
\renewcommand{\arraystretch}{1.4}
\begin{tabular}{lccccccccccccccccc}
\toprule
\textbf{Task} & \multicolumn{2}{c}{\textbf{BC}} & \multicolumn{2}{c}{\textbf{P-IQL}} & \multicolumn{2}{c}{\textbf{CPL}} & \multicolumn{2}{c}{\textbf{CPL+KL}} & \multicolumn{2}{c}{\textbf{Pref Trans.}} & \multicolumn{2}{c}{\textbf{IPL}} & \multicolumn{2}{c}{\textbf{PAWS (Trans.)}} & \multicolumn{2}{c}{\textbf{PAWS (MLP)}} \\
\cmidrule(lr){2-3}\cmidrule(lr){4-5}\cmidrule(lr){6-7}\cmidrule(lr){8-9}\cmidrule(lr){10-11}\cmidrule(lr){12-13}\cmidrule(lr){14-15}\cmidrule(lr){16-17}
\textit{\#Preferences} & \textit{50} & \textit{500} & \textit{50} & \textit{500} & \textit{50} & \textit{500} & \textit{50} & \textit{500} & \textit{50} & \textit{500} & \textit{50} & \textit{500} & \textit{50} & \textit{500} & \textit{50} & \textit{500} \\
\midrule
Ant & 546$\pm$43 & 593$\pm$34 & 567$\pm$37 & 847$\pm$12 & 508$\pm$34 & 563$\pm$33 & 518$\pm$22 & 581$\pm$8 & 505$\pm$17 & 784$\pm$17 & 384$\pm$16 & 634$\pm$10 & 685$\pm$22 & 850$\pm$10 & \bestgroupone{707$\pm$26} & \bestgrouptwo{882$\pm$16} \\
HalfCheetah & 1019$\pm$29 & 1085$\pm$66 & 1029$\pm$20 & 1031$\pm$107 & 998$\pm$45 & 950$\pm$87 & 998$\pm$35 & 948$\pm$93 & 945$\pm$25 & 968$\pm$85 & 690$\pm$3 & 1118$\pm$26 & \bestgroupone{1063$\pm$26} & \bestgrouptwo{1483$\pm$92} & 810$\pm$93 & 1450$\pm$251 \\
Hopper & 456$\pm$45 & 401$\pm$22 & 397$\pm$39 & 571$\pm$65 & 391$\pm$39 & 398$\pm$35 & 451$\pm$61 & 405$\pm$32 & 500$\pm$28 & 552$\pm$48 & 436$\pm$85 & 385$\pm$48 & 484$\pm$41 & 563$\pm$60 & \bestgroupone{512$\pm$57} & \bestgrouptwo{637$\pm$50} \\
Walker2d & 167$\pm$33 & 266$\pm$30 & 116$\pm$33 & 947$\pm$32 & 119$\pm$28 & 446$\pm$55 & 123$\pm$31 & 476$\pm$77 & 150$\pm$40 & 726$\pm$56 & 160$\pm$76 & 251$\pm$34 & 208$\pm$82 & 997$\pm$21 & \bestgroupone{235$\pm$70} & \bestgrouptwo{1023$\pm$16} \\
\bottomrule
\end{tabular}
\end{adjustbox}
\vspace{-2.5mm}
\end{table*}

\textbf{Human-Labeled Data.}
Humans may provide preferences that differ from those of an oracle~\citep{pebble}. Therefore, for two Meta-World tasks, Button Press and Door Open, we collected 50 pairwise comparisons per task from each of 10 non-author human labelers.  Further details and statistical analysis, as well as the Graphical User Interface used to collect the preferences, are provided in \cref{app:human}.

PAWS and all the baselines used the same hyperparameters as in the main evaluation (\cref{tab:sr_metaworld_transposed}). Each of the 10 seeds was trained on the 50 comparisons provided by a single distinct labeler, so the reported variance reflects both seed and labeler variability. The mean success rates over 10 seeds are presented in \cref{tab:sr_human_labels_transposed}. Overall, the results are in line with the preferences provided by the oracle and show the benefits of using PAWS, with PAWS (MLP) achieving the highest success rate on both tasks and PAWS (Trans.) ranking second on Door Open.

\begin{table*}[t]
\centering
\small
\setlength\tabcolsep{4pt}
\caption{Task success (\%) $\pm$ 2SE with \textbf{human-collected preferences} from 10 non-author participants on Button Press and Door Open tasks. Each participant labeled 50 pairwise comparisons per task.}
\label{tab:sr_human_labels_transposed}
\begin{adjustbox}{max width=\textwidth}
\footnotesize
\renewcommand{\arraystretch}{1.4}
\begin{tabular}{lccccccccc}
\toprule
\textbf{Task} & \textbf{BC} & \textbf{P-IQL} & \textbf{CPL} & \textbf{CPL+KL} & \textbf{Pref Trans.} & \textbf{IPL} & \textbf{PAWS (Trans.)} & \textbf{PAWS (MLP)} \\
\midrule
Button Press & 44.7$\pm$2.4 & 53.2$\pm$3.7 & 47.5$\pm$7.4 & 49.0$\pm$10.1 & 55.2$\pm$5.6 & 47.8$\pm$4.2 & \bestgroupone{57.2$\pm$5.6} & 56.2$\pm$5.6 \\
Door Open & 73.6$\pm$2.4 & 64.0$\pm$3.7 & 55.6$\pm$9.2 & 54.8$\pm$8.0 & 76.9$\pm$4.4 & 76.2$\pm$3.3 & 82.2$\pm$3.8 & \bestgroupone{86.0$\pm$3.0} \\
\bottomrule
\end{tabular}
\end{adjustbox}
\end{table*}

\textbf{Varying the Number of Effective Samples.}
As discussed in \cref{sec::opt_lambda}, the effective sample size \( n_{\mathrm{eff}} \) controls how far the updated policy is allowed to deviate from the provided data distribution. Larger values of \( n_{\mathrm{eff}} \) enforce higher overlap between the updated policy and the segment distribution, while smaller values permit more selective updates that emphasize only segments with high advantage.

We study the effect of  \( n_{\mathrm{eff}} \) across different data regimes. In \cref{fig:ess_1000}, we consider a higher-data setting with 500 preferences and report aggregated success rates over the \textit{Peg Insert Side}, \textit{Sweep Into}, and \textit{Drawer Open} tasks for six relative values of \( n_{\mathrm{eff}} \), ranging from \(5\%\) to \(50\%\) of the available samples. In this regime, performance improves as the effective sample size decreases, peaking at \( n_{\mathrm{eff}} = 10\% \), indicating that with sufficient data coverage the policy can benefit from more aggressive, advantage-driven updates.

In contrast, \cref{fig:ess_100} shows results for the same range of relative effective sample sizes on the same three tasks using 50 preferences. In this low-data regime, performance is maximized at a substantially larger value of \( n_{\mathrm{eff}}\). In this case, using too few effective samples leads to poor estimation of the policy, whereas larger values of \( n_{\mathrm{eff}} \) keep the policy closer to the provided data distribution and yield more stable improvements.

Taken together, these results suggest that larger datasets permit more aggressive policy updates, whereas smaller datasets benefit from conservative updates that rely on broader data support. The observed trend further indicates that the absolute number of effective samples, rather than their relative proportion, may provide a more robust criterion for controlling policy updates across different data regimes. We consider this a promising direction for future work.
In this study, we fix the effective sample size as a relative proportion of the available data to simplify evaluation and ensure consistent behavior across tasks with different dataset sizes. Despite this simplification, the chosen hyperparameters yield strong and stable performance across all experiments, although further gains for PAWS may be achievable with adaptive or absolute effective sample size selection, as suggested by the analysis in \cref{fig:ess_1000} and \cref{fig:ess_100}. We note that results in these figures are averaged over 5 random seeds and may therefore differ slightly from those reported in \cref{tab:sr_metaworld_transposed}.

\begin{table}[t]
\centering
\scriptsize
\setlength\tabcolsep{6pt}
\caption{Aggregated performance (success rates \%) across all 10 Meta-World tasks for two architectures (MLP, Transformer) and two policy update granularities (segment-level, used by PAWS, and state-action level, denoted \textbf{State}).}
\label{tab:aggregated_performance}
\begin{tabular}{lcccc}
\toprule
\textit{\#Pref.} & \textbf{PAWS(MLP)} & \textbf{PAWS(Trans.)} & \textbf{State(MLP)} & \textbf{State(Trans.)} \\
\midrule
\textit{50} & 51 & \textbf{52} & 40 & 44 \\
\textit{500} & \textbf{78} & \textbf{78} & 67 & 63 \\
\bottomrule
\end{tabular}
\vspace{-1.5mm}
\end{table}

\begin{figure*}[!htbp]
    \centering
    \begin{subfigure}[t]{0.32\linewidth}
        \centering
        \includegraphics[width=\linewidth]{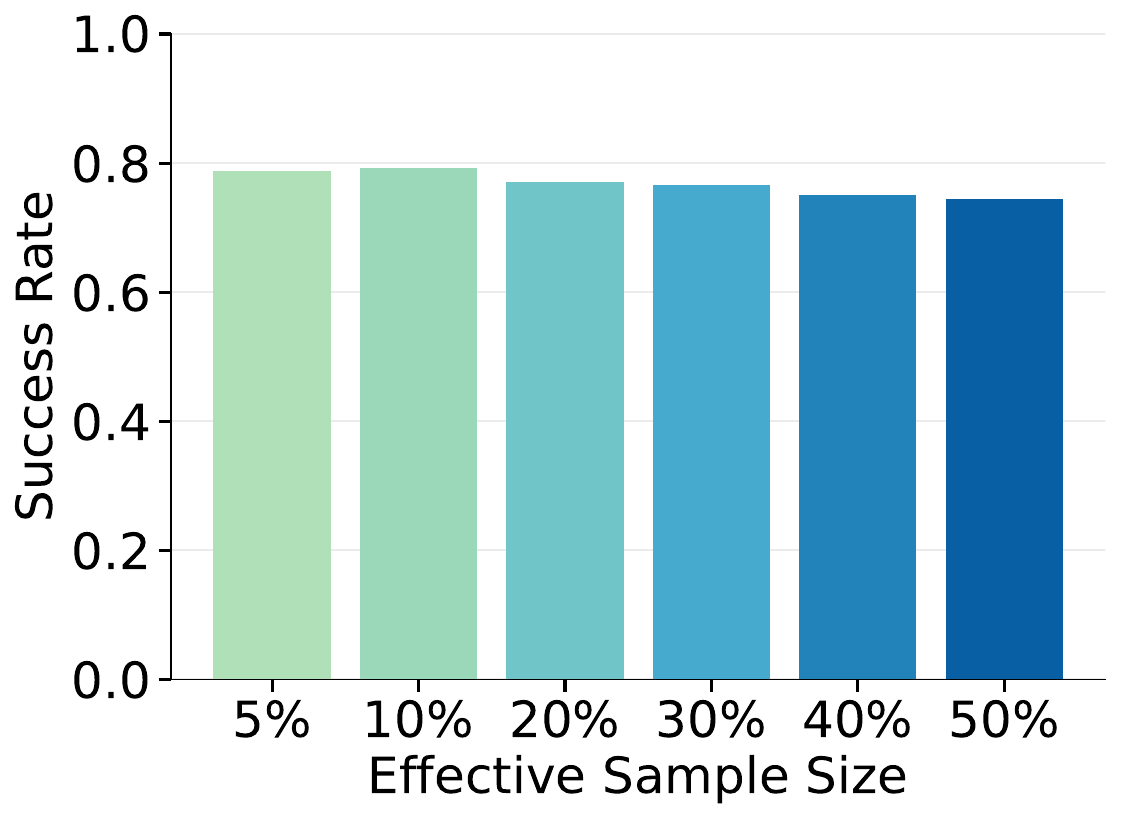}
        \caption{$n=500$ preferences.}
        \label{fig:ess_1000}
    \end{subfigure}
    \hfill
    \begin{subfigure}[t]{0.32\linewidth}
        \centering
        \includegraphics[width=\linewidth]{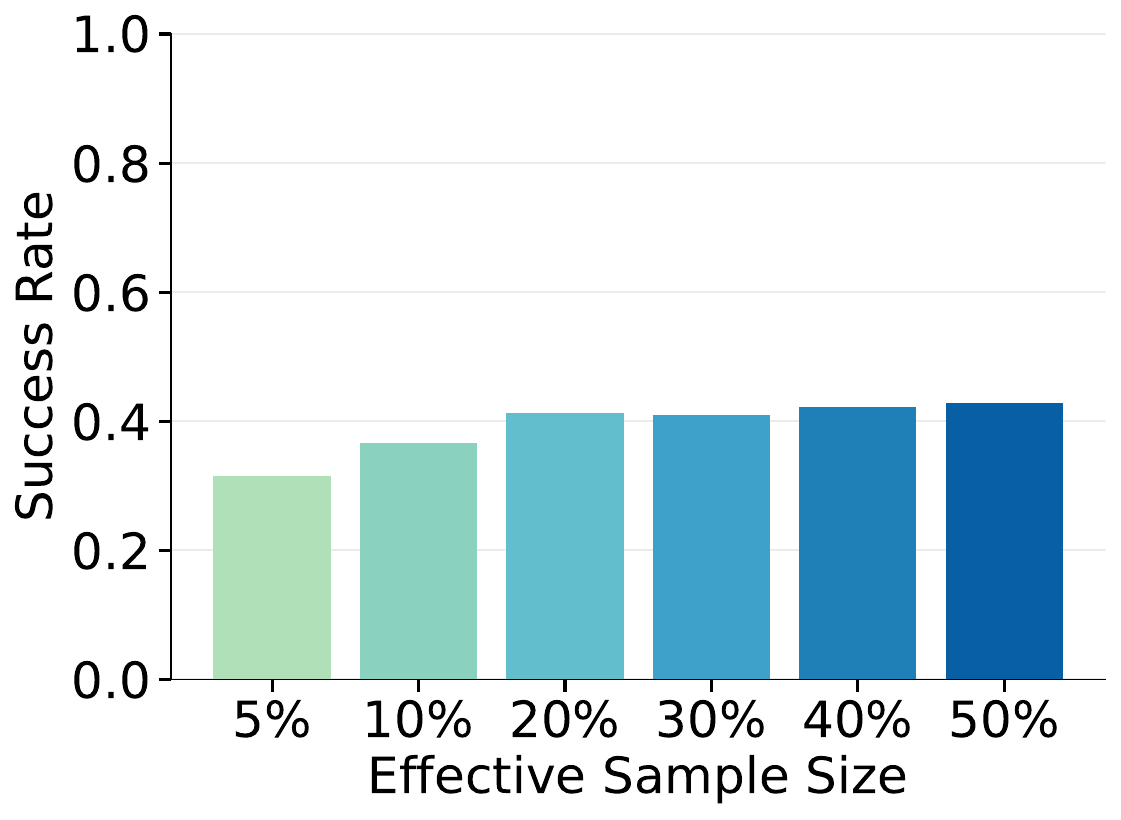}
        \caption{$n=50$ preferences.}
        \label{fig:ess_100}
    \end{subfigure}
    \hfill
    \begin{subfigure}[t]{0.32\linewidth}
        \centering
        \includegraphics[width=\linewidth]{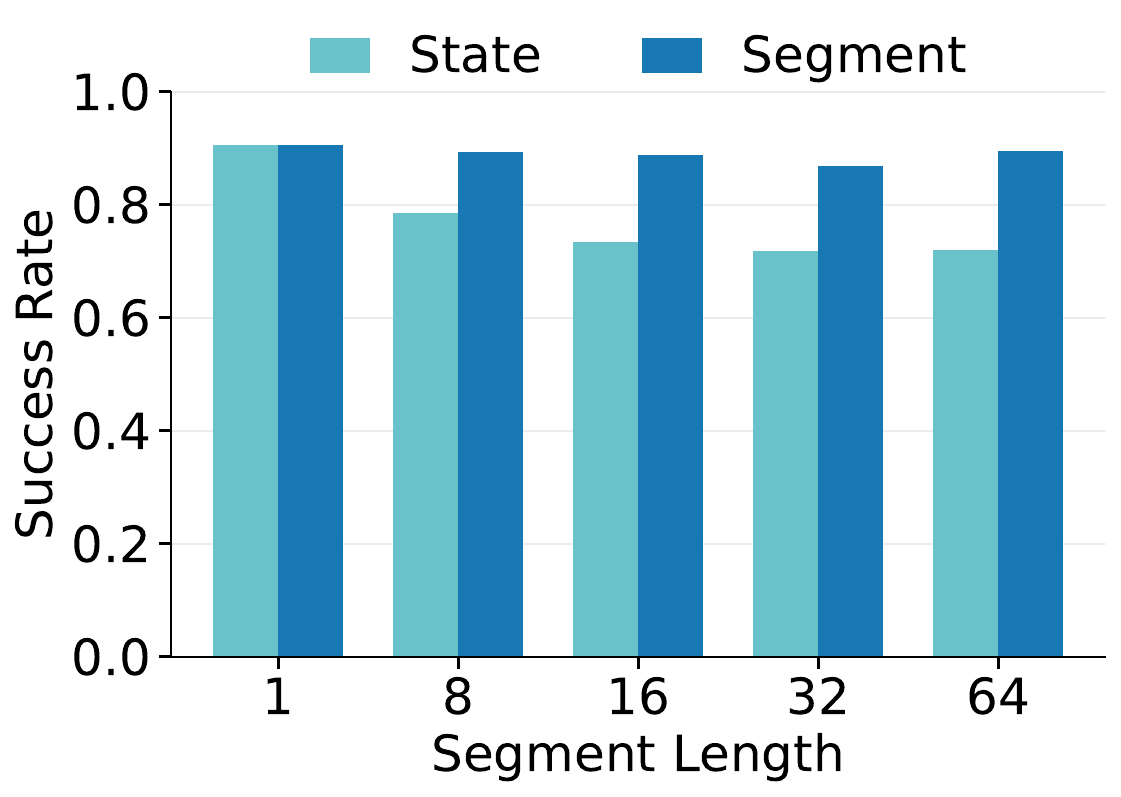}
        \caption{Varying segment length.}
        \label{fig:diff_segment_lengths}
    \end{subfigure}
    \hfill
    \caption{\textbf{Ablations on the (a)-(b) Number of Effective Samples and the (c) Segment Length}. 
    \textbf{(a)-(b)} Effect of Number of Effective Samples for \textbf{(a)} 500 Preferences and for \textbf{(b)} 50 Preferences aggregated over the \textit{Peg Insert Side}, \textit{Sweep Into}, and \textit{Drawer Open} tasks, each with 5 seeds. The results suggest that for a high number of available data points \textbf{(a)}, a smaller number of effective samples leads to improved results, allowing the policy to deviate more from the data distribution's support. For smaller number of preferences \textbf{(b)}, staying close to the data distribution with a higher relative number of effective samples is beneficial.
    \textbf{(c)} Increasing segment length amplifies the \textit{distribution shift} when using single-timestep advantages for policy optimization, leading to worse performance. This problem is absent when using segment-based advantages for the policy update. Experiments are done on 5 seeds, and the same number of state-action pairs is used for all segment lengths.
    }
    \vspace{-2mm}
    \label{fig:ablation_combined}
\end{figure*}

\textbf{Segment- and State-action-based Updates.}
In addition to the results reported in \cref{tab:sr_metaworld_transposed}, we explicitly compare segment-level policy updates with state-action-level updates that rely on the learned advantage function. This comparison directly evaluates the effect of the training and inference distribution shift discussed earlier: while the advantage function is trained on segment-level preference data, state-action-based updates require querying it on individual time steps, thereby reintroducing the mismatch present in standard PbRL methods. We further consider two architectures for the advantage function, parameterized either by an MLP or a Transformer.
Aggregated results across all tasks are reported in \cref{tab:aggregated_performance}. Consistent with our analysis, PAWS achieves substantially higher performance than policies updated using single-step advantages. Additional per-task results are provided in \cref{tab:tasks_performance_100} and \cref{tab:tasks_performance_1000} in \cref{appdx::Comparison of Segment and State-based updates}.

We note that our setting assumes action quality is correlated over time, reflected by our use of different policies with varying performance for data collection. We argue this is more realistic than datasets that simulate varying data quality by adding Gaussian noise to individual actions~\citep{hejna2024contrastive}. Under uncorrelated noise, within-segment quality would be highly variable and single-action importance weights would likely perform better. In the multi-expert scenario considered here, however, segment-level weights clearly outperform single-step weights.

\textbf{Varying the Segment Length.}
We evaluate our method across different segment lengths and compare segment-wise policy updates with state-action-based updates. When the segment length is one, both settings coincide. In practice, however, such short segments are insufficient for reliable preference assessment, as they provide limited contextual information about behavior quality. For all experiments, we fix the total number of segments to $1{,}000$ and generate preference datasets by uniformly sampling segment pairs. To ensure a fair comparison across segment lengths and maintain an approximately equal number of state-action samples for training the advantage function, we increase the number of sampled preferences as the segment length decreases. For instance, for segments of length $64$, we sample $10{,}000$ preference pairs, while for segments of length $32$, we sample $20{,}000$ preference pairs. 
Results averaged over 10 Meta-World tasks and 5 random seeds are reported in \cref{fig:diff_segment_lengths}. As can be seen, state-action-based updates degrade sharply once the segment length exceeds one, with the largest drop occurring up to a length of roughly $16$ and performance remaining low for longer segments, reflecting the training and inference distribution shift. In contrast, the performance of our method remains substantially more stable, demonstrating that segment-wise policy updates effectively mitigate this degradation and highlighting the benefits of preserving segment-level preference information.

To further isolate the role of segment length at the policy-update stage, we additionally train a Transformer-based advantage function on $500$ preferences of segment length $64$ and then perform policy updates with progressively shorter segments. As shown in \cref{tab:update_segment_length}, the success rate decreases monotonically as the update-time segment length shrinks. This confirms that shorter segments at update time degrade performance even when the advantage function itself is trained on longer, more informative segments, reinforcing the importance of operating at the segment level throughout both training and policy optimization.
\begin{table}[t]
\centering
\caption{The Transformer-based advantage function is trained once on $500$ preferences, then used to update the policy with progressively shorter segments. Averaged over 10 Meta-World tasks and 5 seeds. The $64$ column corresponds to the setting reported in \cref{tab:sr_metaworld_transposed}.}
\label{tab:update_segment_length}
\begin{tabular}{lcccc}
\toprule
\textbf{Segment length} & $64$ (\cref{tab:sr_metaworld_transposed}) & $32$ & $16$ & $8$ \\
\midrule
\textbf{Success Rate [\%]} & $78.2$ & $75.8$ & $69.9$ & $64.3$ \\
\bottomrule
\end{tabular}
\end{table}

\textbf{Spearman’s Rank Correlation for Segment- and State-action-based Updates.}
To assess temporal credit assignment, we measure how well learned policies preserve the expert’s relative ranking of trajectory segments using Spearman’s rank correlation coefficient $r_s$. This metric computes the Pearson coefficient for the rankings of two vectors. More details are presented in \cref{sec:spearman}. In our case, for each task, we compute the action likelihoods at each state under three policies: the expert SAC policy, the policy trained using our segment-based updates, and the policy trained using single-step updates.

For each trajectory segment, we aggregate the corresponding likelihoods and compute Spearman’s rank correlation between the expert policy and each learned policy. This yields two correlation values per segment, one for the segment-based update and one for the state-action-based update. Higher correlation indicates better preservation of the expert’s preference ordering and, consequently, more effective temporal credit assignment.
We evaluate all tasks using 5 random seeds and 500 preference queries. Overall, the segment-based update used in PAWS achieves a higher mean Spearman’s rank correlation coefficient ($r_s^{seg.}=0.22$) than the step-based approach ($r_s^{step}=0.05$), indicating that segment-level updates better preserve the expert’s ranking over behaviors. Per-task results are reported in \cref{sec:spearman}.

\section{Conclusion}
We propose PAWS, a novel method for preference learning that first learns an advantage function and subsequently updates the policy by exploiting it. Notably, PAWS updates the policy based on segment data rather than single state-action pairs to mitigate the \textit{temporal credit assignment problem} that arises because the advantage function is learned on whole segments rather than single state-action pairs. Overall, the proposed method achieves superior performance on various simulated manipulation and locomotion tasks, with preferences coming from both a simulated oracle and real humans. Our results further indicate that the gains stem primarily from aligning the training and inference distributions of the learned utility, rather than from the specific architectural choice of the advantage function. Both the MLP and Transformer variants substantially outperform their per-step counterparts, and the benefit is particularly pronounced in the low-data regime, where step-level utility estimates are noisiest.

\textbf{Limitations.} PAWS assigns the same importance weight to every state-action pair within a segment (\cref{eq::grad_ml_loss_sa}), which is well suited to the realistic setting of temporally correlated behavior quality but less effective when a segment mixes high- and low-quality actions, where finer-grained per-step weighting would be preferable.
A second limitation is the need to choose the effective sample size $n^*_{\mathrm{eff}}$. While it is more intuitive than directly tuning the KL bound $\epsilon$, our ablation suggests that the optimal \emph{relative} value depends on dataset size, with smaller datasets favoring more conservative updates and larger datasets benefiting from more aggressive updates that selectively focus on high-advantage segments.
Finally, our evaluation is restricted to simulated robotic tasks with oracle and small-scale human preference data. Extending the analysis to large-scale, noisier human feedback remains an important next step.

\textbf{Future Work.} A natural direction is to relax the uniform per-segment weighting by introducing a learned per-step modulation that respects the segment-level training distribution, for example through attention-based reweighting within a segment.
We plan to extend PAWS to the offline-to-online case, where newly collected preference data continuously refines both the advantage function and the policy update, enabling preference-based fine-tuning of pretrained models.
A third direction is to replace the fixed relative $n^*_{\mathrm{eff}}$ with an absolute, data-adaptive criterion, as suggested by the trend observed in \cref{fig:ess_1000} and \cref{fig:ess_100}.
A further direction is to study PAWS under realistic preference noise, including non-stationary annotator behavior. Combining segment-level updates with active query selection could focus limited annotator effort on the segments whose advantage estimates would benefit most, improving sample efficiency.

Beyond robotics, the segment-level update perspective directly maps to preference fine-tuning of large generative models, where comparisons are typically expressed over entire responses rather than individual tokens. Standard RLHF pipelines train a per-token reward model from response-level preferences and then optimize the policy with per-token PPO, recreating the same training and inference mismatch we identify here.

\section*{Impact Statement}

% Authors are \textbf{required} to include a statement of the potential broader
% impact of their work, including its ethical aspects and future societal
% consequences. This statement should be in an unnumbered section at the end of
% the paper (co-located with Acknowledgements -- the two may appear in either
% order, but both must be before References), and does not count toward the paper
% page limit. In many cases, where the ethical impacts and expected societal
% implications are those that are well established when advancing the field of
% Machine Learning, substantial discussion is not required, and a simple
% statement such as the following will suffice:

This paper proposes a novel preference-based reinforcement learning method evaluated on robotic manipulation and locomotion tasks in simulation. Such methods may reduce the need for manually designed reward functions, but can inherit biases present in preference signals provided by humans. All experiments are conducted in simulated environments, and we do not anticipate immediate negative societal impacts from this work.

% The above statement can be used verbatim in such cases, but we encourage
% authors to think about whether there is content which does warrant further
% discussion, as this statement will be apparent if the paper is later flagged
% for ethics review.

\section*{Acknowledgements}
We thank the anonymous reviewers for their valuable feedback and suggestions. The work has been financially supported by the German Research Foundation (DFG, Deutsche Forschungsgemeinschaft) as part of the SFB-1574 -- 471687386 ``Circular Factory'', and by the European Research Council (ERC) under the European Union’s Horizon Europe programme through the project SMARTI³ (Grant Agreement No.~101171393). This work has been supported by the German Federal Ministry of Research, Technology, and Space (BMFTR) under the Robotics Institute Germany (RIG). The authors acknowledge support from the InnovationCampus Future Mobility ICM,  and the state of Baden-Württemberg through bwHPC, as well as the HoreKa supercomputer, funded by the Ministry of Science, Research and the Arts Baden-Württemberg and the German Federal Ministry of Education and Research.

% In the unusual situation where you want a paper to appear in the
% references without citing it in the main text, use \nocite
% \nocite{langley00}

\bibliography{references}

@inproceedings{lee2021bpref,
  title     = {B-Pref: Benchmarking Preference-Based Reinforcement Learning},
  author    = {Lee, Kimin and Smith, Laura and Dragan, Anca D. and Abbeel, Pieter},
  booktitle = {NeurIPS Datasets and Benchmarks Track},
  year      = {2021}
}

@article{Wirth2017,
  title   = {A Survey of Preference-Based Reinforcement Learning Methods},
  author  = {Christian Wirth and Riad Akrour and Gerhard Neumann and Johannes F{\"u}rnkranz},
  journal = {Journal of Machine Learning Research},
  volume  = {18},
  number  = {136},
  pages   = {1--46},
  year    = {2017}
}

@inproceedings{rafailov2023direct,
  title     = {Direct Preference Optimization: Your Language Model is Secretly a Reward Model},
  author    = {Rafael Rafailov and Archit Sharma and Eric Mitchell and Stefano Ermon and Christopher D. Manning and Chelsea Finn},
  booktitle = {Proceedings of the 37th International Conference on Neural Information Processing Systems},
  year      = {2023},
  pages     = {16203--16220},
  doi       = {10.5555/3666122.3668460}
}

@inproceedings{hejna2024contrastive,
  title     = {Contrastive Preference Learning: Learning from Human Feedback without Reinforcement Learning},
  author    = {Joey Hejna and Rafael Rafailov and Harshit Sikchi and Chelsea Finn and Scott Niekum and W. Bradley Knox and Dorsa Sadigh},
  booktitle = {International Conference on Learning Representations (ICLR)},
  year      = {2024}
}

@inproceedings{xu2024dpo,
  title     = {Is DPO Superior to PPO for LLM Alignment? A Comprehensive Study},
  author    = {Xu, Shusheng and Fu, Wei and Gao, Jiaxuan and others},
  booktitle = {Proceedings of the 41st International Conference on Machine Learning},
  year      = {2024}
}

@inproceedings{Christiano2017DeepRL,
  title     = {Deep Reinforcement Learning from Human Preferences},
  author    = {Christiano, Paul F. and Leike, Jan and Brown, Tom B. and Martic, Miljan and Legg, Shane and Amodei, Dario},
  booktitle = {Advances in Neural Information Processing Systems},
  volume    = {30},
  year      = {2017}
}

@inproceedings{stiennon2020learning,
  title     = {Learning to Summarize from Human Feedback},
  author    = {Stiennon, Nisan and Ouyang, Long and Wu, Jeffrey and Ziegler, Daniel and Lowe, Ryan and Voss, Chelsea and Radford, Alec and Amodei, Dario and Christiano, Paul},
  booktitle = {Proceedings of the 34th International Conference on Neural Information Processing Systems},
  year      = {2020},
  volume    = {33},
  pages     = {1--15},
  doi       = {10.48550/arXiv.2009.01325}
}

@article{ziegler2019finetuning,
  title   = {Fine-Tuning Language Models from Human Preferences},
  author  = {Ziegler, Daniel M. and Stiennon, Nisan and Wu, Jeffrey and Brown, Tom B. and Radford, Alec and Amodei, Dario and Christiano, Paul and Irving, Geoffrey},
  journal = {arXiv preprint arXiv:1909.08593},
  year    = {2019}
}

@inproceedings{kim2023preference,
  title     = {Preference Transformer: Modeling Human Preferences using Transformers for {RL}},
  author    = {Changyeon Kim and Jongjin Park and Jinwoo Shin and Honglak Lee and Pieter Abbeel and Kimin Lee},
  booktitle = {International Conference on Learning Representations},
  year      = {2023}
}

@article{deepseek_r1_2025,
  title   = {DeepSeek-R1 incentivizes reasoning in LLMs through reinforcement learning},
  author  = {Guo,Daya and Yang, Dejian and Zhang, Haowei and others},
  journal = {Nature},
  year    = {2025}
}

@inproceedings{Abdolmaleki2025Learning,
  title     = {Learning from negative feedback, or positive feedback or both},
  author    = {Abbas Abdolmaleki and Bilal Piot and Bobak Shahriari and Jost Tobias Springenberg and Tim Hertweck and Michael Bloesch and Rishabh Joshi and Thomas Lampe and Junhyuk Oh and Nicolas Heess and Jonas Buchli and Martin Riedmiller},
  booktitle = {International Conference on Learning Representations},
  year      = {2025}
}

@article{knox2024models,
  title   = {Models of human preference for learning reward functions},
  author  = {Knox, W. Bradley and Hatgis-Kessell, Stephane and Booth, Serena and Niekum, Scott and Stone, Peter and Allievi, Alessandro G},
  journal = {Transactions on Machine Learning Research},
  year    = {2024}
}

@inproceedings{peters2010relative,
  title     = {Relative entropy policy search},
  author    = {Peters, Jan and Mulling, Katharina and Altun, Yasemin},
  booktitle = {Proceedings of the AAAI Conference on Artificial Intelligence},
  volume    = {24},
  number    = {1},
  pages     = {1607--1612},
  year      = {2010}
}

@article{daniel2016hierarchical,
  title   = {Hierarchical relative entropy policy search},
  author  = {Daniel, Christian and Neumann, Gerhard and Kroemer, Oliver and Peters, Jan},
  journal = {Journal of Machine Learning Research},
  volume  = {17},
  number  = {93},
  pages   = {1--50},
  year    = {2016}
}

@article{neumann2008fitted,
  title   = {Fitted q-iteration by advantage weighted regression},
  author  = {Neumann, Gerhard and Peters, Jan},
  journal = {Advances in neural information processing systems},
  volume  = {21},
  year    = {2008}
}

@article{nair2020awac,
  title   = {Awac: Accelerating online reinforcement learning with offline datasets},
  author  = {Nair, Ashvin and Gupta, Abhishek and Dalal, Murtaza and Levine, Sergey},
  journal = {arXiv preprint arXiv:2006.09359},
  year    = {2020}
}

@book{boyd2004convex,
  title     = {Convex optimization},
  author    = {Boyd, Stephen P and Vandenberghe, Lieven},
  year      = {2004},
  publisher = {Cambridge university press}
}

@inproceedings{abdolmaleki2018maximum,
  title     = {Maximum a Posteriori Policy Optimisation},
  author    = {Abdolmaleki, Abbas and Springenberg, Jost Tobias and Tassa, Yuval and Munos, Remi and Heess, Nicolas and Riedmiller, Martin},
  booktitle = {International Conference on Learning Representations},
  year      = {2018}
}

@inproceedings{peters2007reinforcement,
  title     = {Reinforcement learning by reward-weighted regression for operational space control},
  author    = {Peters, Jan and Schaal, Stefan},
  booktitle = {Proceedings of the 24th international conference on Machine learning},
  pages     = {745--750},
  year      = {2007}
}

@article{kober2008policy,
  title   = {Policy search for motor primitives in robotics},
  author  = {Kober, Jens and Peters, Jan},
  journal = {Advances in neural information processing systems},
  volume  = {21},
  year    = {2008}
}

@article{abdolmaleki2015model,
  title   = {Model-based relative entropy stochastic search},
  author  = {Abdolmaleki, Abbas and Lioutikov, Rudolf and Peters, Jan R and Lau, Nuno and Paulo Reis, Luis and Neumann, Gerhard},
  journal = {Advances in Neural Information Processing Systems},
  volume  = {28},
  year    = {2015}
}

@inproceedings{instruct_gpt,
  title     = {Training language models to follow instructions with human feedback},
  author    = {Ouyang, Long and Wu, Jeff and Jiang, Xu and Almeida, Diogo and Wainwright, Carroll L. and Mishkin, Pamela and Zhang, Chong and Agarwal, Sandhini and Slama, Katarina and Ray, Alex and Schulman, John and Hilton, Jacob and Kelton, Fraser and Miller, Luke and Simens, Maddie and Askell, Amanda and Welbl, Johannes and Leike, Jan and Lowe, Ryan and Christiano, Paul},
  booktitle = {Advances in Neural Information Processing Systems (NeurIPS)},
  year      = {2022}
}

@article{schulman2017proximal,
  title   = {Proximal Policy Optimization Algorithms},
  author  = {Schulman, John and Wolski, Filip and Dhariwal, Prafulla and Radford, Alec and Klimov, Oleg},
  journal = {arXiv preprint arXiv:1707.06347},
  year    = {2017}
}

@inproceedings{pebble,
  title     = {PEBBLE: Feedback-Efficient Interactive Reinforcement Learning via Relabeling Experience and Unsupervised Pre-training},
  author    = {Lee, Kimin and Smith, Laura and Abbeel, Pieter},
  booktitle = {International Conference on Machine Learning (ICML)},
  year      = {2021}
}

@article{kostrikov2021offline,
  title   = {Offline reinforcement learning with implicit q-learning},
  author  = {Kostrikov, Ilya and Nair, Ashvin and Levine, Sergey},
  journal = {arXiv preprint arXiv:2110.06169},
  year    = {2021}
}

@inproceedings{schulman2015trust,
  title     = {Trust Region Policy Optimization},
  author    = {Schulman, John and Levine, Sergey and Abbeel, Pieter and Jordan, Michael I. and Moritz, Philipp},
  booktitle = {Proceedings of the 32nd International Conference on Machine Learning (ICML)},
  series    = {Proceedings of Machine Learning Research},
  volume    = {37},
  pages     = {1889--1897},
  year      = {2015},
  publisher = {PMLR}
}

@inproceedings{li2024open,
  title     = {Open the Black Box: Step-based Policy Updates for Temporally-Correlated Episodic Reinforcement Learning},
  author    = {Li, Ge and Zhou, Hongyi and Roth, Dominik and Thilges, Serge and Otto, Fabian and Lioutikov, Rudolf and Neumann, Gerhard},
  booktitle = {The Twelfth International Conference on Learning Representations},
  year      = {2024}
}

@inproceedings{li2025top,
  title     = {TOP-ERL: Transformer-based Off-Policy Episodic Reinforcement Learning},
  author    = {Li, Ge and Tian, Dong and Zhou, Hongyi and Jiang, Xinkai and Lioutikov, Rudolf and Neumann, Gerhard},
  booktitle = {The Thirteenth International Conference on Learning Representations},
  year      = {2025}
}

@inproceedings{hoang2025geometryaware,
  title     = {Geometry-aware {RL} for Manipulation of Varying Shapes and Deformable Objects},
  author    = {Tai Hoang and Huy Le and Philipp Becker and Vien Anh Ngo and Gerhard Neumann},
  booktitle = {The Thirteenth International Conference on Learning Representations},
  year      = {2025}
}

@article{BradleyTerry1952,
  author  = {Bradley, Ralph Allan and Terry, Milton E.},
  title   = {Rank Analysis of Incomplete Block Designs: I. The Method of Paired Comparisons},
  journal = {Biometrika},
  year    = {1952},
  volume  = {39},
  number  = {3/4},
  pages   = {324--345}
}

@article{baker2009action,
  title     = {Action understanding as inverse planning},
  author    = {Baker, Chris L and Saxe, Rebecca and Tenenbaum, Joshua B},
  journal   = {Cognition},
  volume    = {113},
  number    = {3},
  pages     = {329--349},
  year      = {2009},
  publisher = {Elsevier}
}

@article{ivison2024unpacking,
  title   = {Unpacking {DPO} and {PPO}: Disentangling Best Practices for Learning from Preference Feedback},
  author  = {Ivison, Hamish and Wang, Yizhong and Liu, Jiacheng and Wu, Zeqiu and Pyatkin, Valentina and Lambert, Nathan and Smith, Noah A and Choi, Yejin and Hajishirzi, Hanna},
  journal = {Advances in Neural Information Processing Systems},
  volume  = {37},
  pages   = {36602--36633},
  year    = {2024}
}

@inproceedings{ng2000algorithms,
  title     = {Algorithms for inverse reinforcement learning.},
  author    = {Ng, Andrew Y and Russell, Stuart and others},
  booktitle = {ICML},
  volume    = {1},
  number    = {2},
  pages     = {2},
  year      = {2000}
}

@article{nakano2021webgpt,
  title   = {Webgpt: Browser-assisted question-answering with human feedback},
  author  = {Nakano, Reiichiro and Hilton, Jacob and Balaji, Suchir and Wu, Jeff and Ouyang, Long and Kim, Christina and Hesse, Christopher and Jain, Shantanu and Kosaraju, Vineet and Saunders, William and others},
  journal = {arXiv preprint arXiv:2112.09332},
  year    = {2021}
}

@inproceedings{gao2023scaling,
  title        = {Scaling laws for reward model overoptimization},
  author       = {Gao, Leo and Schulman, John and Hilton, Jacob},
  booktitle    = {International Conference on Machine Learning},
  pages        = {10835--10866},
  year         = {2023},
  organization = {PMLR}
}

@article{zhao2024prefmmt,
  title   = {Prefmmt: Modeling human preferences in preference-based reinforcement learning with multimodal transformers},
  author  = {Zhao, Dezhong and Wang, Ruiqi and Suh, Dayoon and Kim, Taehyeon and Yuan, Ziqin and Min, Byung-Cheol and Chen, Guohua},
  journal = {arXiv preprint arXiv:2409.13683},
  year    = {2024}
}

@inproceedings{yu2020meta,
  title     = {Meta-World: A Benchmark and Evaluation for Multi-Task and Meta Reinforcement Learning},
  author    = {Yu, Tianhe and Quillen, Deirdre and He, Zhanpeng and Julian, Ryan and Hausman, Karol and Finn, Chelsea and Levine, Sergey},
  booktitle = {Conference on Robot Learning},
  pages     = {1094--1100},
  year      = {2020},
  volume    = {100},
  series    = {Proceedings of Machine Learning Research}
}

@inproceedings{vaswani2017attention,
  title     = {Attention Is All You Need},
  author    = {Vaswani, Ashish and Shazeer, Noam and Parmar, Niki and Uszkoreit, Jakob and Jones, Llion and Gomez, Aidan N. and Kaiser, {\L}ukasz and Polosukhin, Illia},
  booktitle = {Advances in Neural Information Processing Systems},
  pages     = {5998--6008},
  year      = {2017}
}

@inproceedings{cho2025ppl,
  title     = {Policy-labeled Preference Learning: Is Preference Enough for RLHF?},
  author    = {Taehyun Cho and Seokhun Ju and Seungyub Han and Dohyeong Kim and Kyungjae Lee and Jungwoo Lee},
  booktitle = {Proceedings of the 42nd International Conference on Machine Learning (ICML)},
  year      = {2025}
}

@inproceedings{haarnoja2018soft,
  title     = {Soft Actor-Critic: Off-Policy Maximum Entropy Deep Reinforcement Learning with a Stochastic Actor},
  author    = {Tuomas Haarnoja and Aurick Zhou and Pieter Abbeel and Sergey Levine},
  booktitle = {Proceedings of the 35th International Conference on Machine Learning (ICML)},
  year      = {2018},
  pages     = {1861--1870},
  volume    = {80},
  series    = {Proceedings of Machine Learning Research}
}

@inproceedings{taranovic2023adversarial,
  title     = {Adversarial Imitation Learning with Preferences},
  author    = {Taranovic, Aleksandar and Kupcsik, Andras G. and Freymuth, Niklas and Neumann, Gerhard},
  booktitle = {International Conference on Learning Representations (ICLR)},
  year      = {2023}
}

@article{biyik2021learning,
  title   = {Learning Reward Functions from Diverse Sources of Human Feedback: Optimally Integrating Demonstrations and Preferences},
  author  = {Biyik, Erdem and Losey, Dylan P. and Palan, Malayandi and Landolfi, Nicholas C. and Shevchuk, Gleb and Sadigh, Dorsa},
  journal = {The International Journal of Robotics Research (IJRR)},
  year    = {2021},
  doi     = {10.1177/02783649211041652}
}

@inproceedings{ibarz2018reward,
  title     = {Reward learning from human preferences and demonstrations in Atari},
  author    = {Ibarz, Borja and Leike, Jan and Pohlen, Tobias and Irving, Geoffrey and Legg, Shane and Amodei, Dario},
  booktitle = {Advances in Neural Information Processing Systems (NeurIPS)},
  year      = {2018}
}

@inproceedings{myers2021learning,
  title     = {Learning Multimodal Rewards from Rankings},
  author    = {Myers, Vivek and Biyik, Erdem and Anari, Nima and Sadigh, Dorsa},
  booktitle = {Proceedings of the 5th Conference on Robot Learning (CoRL)},
  year      = {2021}
}

@inproceedings{an2023direct,
  title     = {Direct Preference-based Policy Optimization without Reward Modeling},
  author    = {Gaon An and Junhyeok Lee and Xingdong Zuo and Norio Kosaka and Kyung-Min Kim and Hyun Oh Song},
  booktitle = {Advances in Neural Information Processing Systems},
  year      = {2023}
}

@inproceedings{hejna2023inverse,
  title     = {Inverse Preference Learning: Preference-based RL without a Reward Function},
  author    = {Joey Hejna and Dorsa Sadigh},
  booktitle = {36th Conference on Neural Information Processing Systems (NeurIPS 2023)},
  year      = {2023}
}

@article{gao2024hindsight,
  title   = {Hindsight Preference Learning for Offline Preference-based Reinforcement Learning},
  author  = {Chen-Xiao Gao and Shengjun Fang and Chenjun Xiao and Yang Yu and Zongzhang Zhang},
  journal = {arXiv preprint arXiv:2407.04451},
  year    = {2024}
}

@inproceedings{verma2024hindsight,
  title     = {Hindsight {\scshape PRIOR}s for Reward Learning from Human Preferences},
  author    = {Mudit Verma and Katherine Metcalf},
  booktitle = {12th International Conference on Learning Representations (ICLR 2024)},
  year      = {2024}
}

@article{towers2024gymnasium,
  title   = {Gymnasium: A Standard Interface for Reinforcement Learning Environments},
  author  = {Towers, Mark and Kwiatkowski, Ariel and Terry, Jordan and Balis, John U and De Cola, Gianluca and Deleu, Tristan and Goul{\~a}o, Manuel and Kallinteris, Andreas and Krimmel, Markus and KG, Arjun and others},
  journal = {arXiv preprint arXiv:2407.17032},
  year    = {2024}
}

@inproceedings{haarnoja2017reinforcement,
  title        = {Reinforcement learning with deep energy-based policies},
  author       = {Haarnoja, Tuomas and Tang, Haoran and Abbeel, Pieter and Levine, Sergey},
  booktitle    = {International conference on machine learning},
  pages        = {1352--1361},
  year         = {2017},
  organization = {PMLR}
}

@article{fu2020d4rl,
  title   = {D4rl: Datasets for deep data-driven reinforcement learning},
  author  = {Fu, Justin and Kumar, Aviral and Nachum, Ofir and Tucker, George and Levine, Sergey},
  journal = {arXiv preprint arXiv:2004.07219},
  year    = {2020}
}

@inproceedings{otto_iclr2021,
  title={Differentiable Trust Region Layers for Deep Reinforcement Learning},
  author={Otto, Fabian and Becker, Philipp and Anh Vien, Ngo and Ziesche, Hanna Carolin and Neumann, Gerhard},
  booktitle={International Conference on Learning Representations},
  year={2021}
}
\bibliographystyle{icml2026}

%%%%%%%%%%%%%%%%%%%%%%%%%%%%%%%%%%%%%%%%%%%%%%%%%%%%%%%%%%%%%%%%%%%%%%%%%%%%%%%
%%%%%%%%%%%%%%%%%%%%%%%%%%%%%%%%%%%%%%%%%%%%%%%%%%%%%%%%%%%%%%%%%%%%%%%%%%%%%%%
% APPENDIX
%%%%%%%%%%%%%%%%%%%%%%%%%%%%%%%%%%%%%%%%%%%%%%%%%%%%%%%%%%%%%%%%%%%%%%%%%%%%%%%
%%%%%%%%%%%%%%%%%%%%%%%%%%%%%%%%%%%%%%%%%%%%%%%%%%%%%%%%%%%%%%%%%%%%%%%%%%%%%%%
\newpage
\appendix
\onecolumn

\section{Proofs}
\subsection{Proof of the Optimal Distribution}\label{appdx::derivation_opt_distr}

\begin{proof}
The Lagrangian to the optimization problem in~\cref{eq::PolicyOpt} is given by
\begin{align} 
    L(p,\lambda,\beta) &= \int p(\tau) A_\phi(\tau)\,d\tau + \lambda\left(\epsilon - \int p(\tau)\bigl(\log p(\tau) - \log p_D(\tau)\bigr) d\tau\right) \nonumber \\
    &+ \beta \left(1- \int p(\tau) \,d\tau \right) \nonumber \\
    &=\int p(\tau) \Bigl( A_\phi(\tau) -\lambda \bigl(\log p(\tau)-\log p_D(\tau)\bigr)-\beta \Bigr) d\tau + \lambda \epsilon +\beta. \label{appdx::eq::lagrangian}
\end{align}

Obtaining the optimal solution for fixed Lagrange multipliers $\lambda, \beta$ yields
\begin{align}
    0 &\overset{!}{=} \nabla_p L =A_\phi(\tau) - \lambda\log p(\tau) + \lambda\log p_D(\tau) - \lambda - \beta  \nonumber \\
    &\rightarrow p^* = \exp\left(\frac{A_\phi(\tau) + \lambda \log p_D(\tau)}{\lambda}\right) \exp\left(\frac{-\beta-\lambda}{\lambda}\right) \label{appdx::eq::opt_pol}
\end{align}
\end{proof}

\subsection{Proof of the Dual Function}\label{appdx::derivation_dual_fct}

\begin{proof}    
We can eliminate the Lagrange multiplier $\beta$ to the normalization constraint 
\begin{align}\label{appdx:eq::norm_constraint}
    \int p(\tau) \,d\tau= 1
\end{align} 
in the optimization problem~\cref{eq::PolicyOpt} by inserting the optimal solution in~\cref{appdx::eq::opt_pol} into the~\cref{appdx:eq::norm_constraint}, such that we obtain
\begin{align}
    1&=\exp\left(\frac{-\beta-\lambda}{\lambda}\right)\int\exp\left(\frac{A_\phi(\tau)+\lambda\log p_D(\tau)}{\lambda}\right)d\tau \nonumber \\
    0&=\frac{-\beta-\lambda}{\lambda} + \log \int\exp{\left(\frac{A_\phi(\tau)+\lambda\log p_D(\tau)}{\lambda}\right)}d\tau \nonumber \\
    \beta &= -\lambda +\lambda\log \int\exp{\left(\frac{A_\phi(\tau)+\lambda\log p_D(\tau)}{\lambda}\right)}d\tau \label{appdx::eq::opt_norm_lm}
\end{align}
By inserting the optimal solution $p^*$ in~\cref{appdx::eq::opt_pol} and $\beta$ in~\cref{appdx::eq::opt_norm_lm} into the Lagrangian in~\cref{appdx::eq::lagrangian}, we obtain
\begin{align}
    g(\lambda) &= \int p^*(\tau)\Bigl(A_\phi(\tau)-A_\phi(\tau)-\lambda\log p_D(\tau)+\beta+\lambda +\lambda\log p_D(\tau)-\beta\Bigr)d\tau +\lambda \epsilon +\beta \nonumber \\
    &= \int p^*(\tau)\lambda \,d\tau + \lambda\epsilon + \beta\nonumber \\
    &= \lambda + \lambda \epsilon +\beta \nonumber \\
    &= \lambda\epsilon +\lambda\log \int\exp{\left(\frac{A_\phi(\tau)+\lambda\log p_D(\tau)}{\lambda}\right)}d\tau \nonumber \\
    &= \lambda\epsilon +\lambda\log \int p_D(\tau)\exp{\left(\frac{A_\phi(\tau)}{\lambda}\right)}d\tau .\label{appdx::eq::dual}
\end{align}
\end{proof}

\pagebreak
\subsection{Proof of Policy Extraction}\label{appdx::derivation_weighted_mle}
\begin{proof}
    The moment-projection of the non-parametric optimal distribution \(p^*\) onto the set of parametrizable distributions,
    \begin{equation*}
        \theta^* = \operatorname{argmin}_{\theta} D_{\mathrm{KL}} \bigl(p^*(\tau) \,\|\, p_\theta(\tau) \bigr),
    \end{equation*}
    is a likelihood maximization as the KL objective,
    \begin{align}
        D_{\mathrm{KL}} \bigl(p^*(\tau) \,\|\, p_\theta(\tau) \bigr) &= \int p^*(\tau) \log \frac{p^*(\tau)}{p_\theta(\tau)} d\tau = \int p_D(\tau) \frac{p^*(\tau)}{p_D(\tau)} \log \frac{p^*(\tau)}{p_\theta(\tau)} d\tau \nonumber \\
        &= \int p_D(\tau) w(\tau) \log p^*(\tau) \,d\tau - \int p_D(\tau) w(\tau) \log p_\theta(\tau) \,d\tau \nonumber \\
        &= C(p_D, p^*) - \mathbb{E}_{p_D} \bigl[w(\tau) \log p_\theta(\tau) \bigr], \nonumber 
    \end{align}
    equals the importance sampled maximum likelihood objective, \(\mathcal{L}(\theta) = \mathbb{E}_{p_D} \bigl[w(\tau) \log p_\theta(\tau) \bigr]\), up to negation and a parameter-independent constant $C(p_D, p^*)$.
    Thus we have 
    \begin{equation}
        \theta^* = \operatorname{argmin}_{\theta} D_{\mathrm{KL}} \bigl(p^*(\tau) \,\|\, p_\theta(\tau) \bigr) = \operatorname{argmax}_{\theta} \mathbb{E}_{p_D} \bigl[w(\tau) \log p_\theta(\tau) \bigr] = \operatorname{argmax}_{\theta} \mathcal{L}(\theta). \nonumber
    \end{equation}
    The importance sampling weights $w(\tau)$ directly follow from~\cref{eq::opt_sol}, yielding
    \begin{equation*}
        w(\tau) = \frac{p^*(\tau)}{p_D(\tau)} \propto \exp\left( \frac{1}{\lambda}A_\phi(\tau)\right),
    \end{equation*}
    or more specifically using~\cref{appdx::eq::opt_pol},
    \begin{align}
        w(\tau) &= \frac{\exp\left(\frac{A_\phi(\tau)}{\lambda}\right) \exp\left(\log p_D(\tau) - \frac{\beta+\lambda}{\lambda}\right)}{p_D(\tau)} \nonumber \\
        &= \exp\left(\frac{A_\phi(\tau)}{\lambda}\right) \exp\left(\frac{-\beta-\lambda}{\lambda}\right) 
        \eqcolon \exp\left(\frac{A_\phi(\tau)}{\lambda}\right) \cdot Z, \label{appdx::eq::impweight_norm}
    \end{align}
    for the normalization constant $Z$.
    Inserting \cref{eq::segment_prob_def} into \cref{eq::ml_loss},
    \begin{align}
        \mathbb{E}_{p_D} \bigl[w(\tau) \log p_\theta(\tau) \bigr]
        &= \mathbb{E}_{p_D} \left[w(\tau) \log \left( p_D(s_0) \prod_{t=0}^{N-1} p(s_{t+1}|s_t, a_t) \pi_{\theta}(a_t|s_t) \right) \right]  \nonumber \\
        &= \mathbb{E}_{p_D} \left[w(\tau) \left( \log p_D(s_0) + \sum_{t=0}^{N-1} \bigl( \log p(s_{t+1}|s_t, a_t) + \log \pi_{\theta}(a_t|s_t) \bigr) \right) \right]  \nonumber \\
        &= \mathbb{E}_{p_D} \left[w(\tau) \sum_{t=0}^{N-1}  \log \pi_{\theta}(a_t|s_t) + C(\tau) \right] \nonumber,
    \end{align}
    considering that $C(\tau)$ is independent of the optimized parameters $\theta$, and that the importance weight normalization constant $Z$ of~\cref{appdx::eq::impweight_norm} is constant, the objective reduces to
    \begin{align}
        \operatorname{argmax}_{\theta} \mathbb{E}_{p_D} \bigl[w(\tau) \log p_\theta(\tau) \bigr]
        &= \operatorname{argmax}_{\theta}\mathbb{E}_{p_D} \left[ \sum_{t=0}^{N-1} \exp\left(\frac{A_\phi(\tau)}{\lambda}\right) \log \pi_{\theta}(a_t|s_t) \right]. \nonumber
    \end{align}
\end{proof}

\pagebreak
\subsection{Derivation of the Gradient}\label{appdx::derivation_state_action_grad}
\begin{proof}
We can rewrite the more common state-action policy optimization counterpart to the maximum likelihood objective from~\cref{eq::ml_loss} as 
\begin{align}\label{eq::ml_loss_sa}
    \mathcal{L}_{ML}(\theta) = \mathbb{E}_{p_D}\left[\frac{1}{Z}\exp\left(\frac{1}{\lambda}\sum_tA_\phi(s_t,a_t)\right)\left(\log p(s_0) + \sum_t \bigl(\log\pi_\theta(a_t|s_t) +\log p(s_{t+1}|s_t,a_t)\bigr)\right)\right],
\end{align}
which includes the unknown initial state distribution $p(s_0)$ and the transition dynamics $p(s_{t+1}|s_t,a_t)$ as well as the intractable normalization constant \(Z = \int \exp\left(\frac{1}{\lambda}\sum_t A_\phi(s_t,a_t)\right) \,d\tau\). However, note that neither of these entities affects the optimization, as the $\theta$-independent log-terms vanish when calculating the gradient and the constant factor $1/Z$ does not change the optimum, yielding
\begin{align}
    \nabla_\theta \mathcal{L} = \mathbb{E}_{p_D}\left[\exp\left(\frac{1}{\lambda}\sum_t A_\phi(s_t,a_t)\right)\left(\sum_t \nabla_\theta\log\pi_\theta(a_t|s_t)\right)\right],
\end{align}
which is closely related to \citet{kober2008policy}.
\end{proof}
\section{Average Returns}
\label{appdx:Average returns}
In \cref{tab:transposed_performance}, we present the results of mean episode returns corresponding to the experiment setup of \cref{tab:sr_metaworld_transposed}.
The relative performance ordering per task remains largely unchanged when compared to the ranking based on success rate in \cref{tab:sr_metaworld_transposed}.

\begin{table}[h]
\centering
\small
\setlength\tabcolsep{2pt}
\caption{Mean episode returns across methods and preference counts. Best results for $n=50$ and $n=500$ are highlighted with \textbf{\textcolor{groupone}{orange}} and \textbf{\textcolor{grouptwo}{teal}}, respectively. }
\label{tab:transposed_performance}
\begin{adjustbox}{max width=\textwidth}
\small
\renewcommand{\arraystretch}{1.4}
\begin{tabular}{lcccccccccccccccc}
\toprule
\multicolumn{1}{c}{} & \multicolumn{16}{c}{\textbf{Methods}} \\
\cmidrule(lr){2-17}
\textbf{Task} & \multicolumn{2}{c}{\textbf{BC}} & \multicolumn{2}{c}{\textbf{P-IQL}} & \multicolumn{2}{c}{\textbf{CPL}} & \multicolumn{2}{c}{\textbf{CPL+KL}} & \multicolumn{2}{c}{\textbf{Pref Trans.}} & \multicolumn{2}{c}{\textbf{IPL}} & \multicolumn{2}{c}{\textbf{PAWS (Trans.)}} & \multicolumn{2}{c}{\textbf{PAWS (MLP)}} \\
\cmidrule(lr){2-3}\cmidrule(lr){4-5}\cmidrule(lr){6-7}\cmidrule(lr){8-9}\cmidrule(lr){10-11}\cmidrule(lr){12-13}\cmidrule(lr){14-15}\cmidrule(lr){16-17}
 & \textbf{50} & \textbf{500} & \textbf{50} & \textbf{500} & \textbf{50} & \textbf{500} & \textbf{50} & \textbf{500} & \textbf{50} & \textbf{500} & \textbf{50} & \textbf{500} & \textbf{50} & \textbf{500} & \textbf{50} & \textbf{500} \\
\midrule
Button Press & 1423$\pm$37 & 1593$\pm$40 & 1423$\pm$39 & 1590$\pm$53 & 1443$\pm$86 & 1598$\pm$55 & 1416$\pm$74 & 1581$\pm$29 & 1483$\pm$97 & 1626$\pm$30 & 1339$\pm$82 & 1584$\pm$33 & 1356$\pm$74 & 1568$\pm$25 & \bestgroupone{1583$\pm$81} & \bestgrouptwo{1722$\pm$22} \\
Door Open & 870$\pm$90 & 970$\pm$38 & 870$\pm$95 & 1593$\pm$43 & 970$\pm$126 & 1598$\pm$55 & 762$\pm$125 & 1581$\pm$29 & 1136$\pm$151 & 1527$\pm$57 & 890$\pm$98 & 1047$\pm$51 & 822$\pm$127 & 1568$\pm$25 & \bestgroupone{1294$\pm$158} & \bestgrouptwo{1819$\pm$27} \\
Drawer Open & 1508$\pm$59 & 1562$\pm$22 & \bestgroupone{1508$\pm$80} & 1562$\pm$22 & 1463$\pm$78 & 1626$\pm$44 & 1472$\pm$78 & \bestgrouptwo{1646$\pm$41} & 1255$\pm$82 & 1631$\pm$22 & 1259$\pm$53 & 1510$\pm$31 & 1481$\pm$61 & 1617$\pm$35 & 1104$\pm$194 & 1634$\pm$47 \\
Faucet Close & 1516$\pm$59 & 1692$\pm$31 & 1516$\pm$59 & 1692$\pm$31 & 1599$\pm$79 & 1900$\pm$27 & 1516$\pm$86 & 1662$\pm$61 & 1581$\pm$69 & 1935$\pm$46 & 1499$\pm$78 & 1713$\pm$49 & 1449$\pm$68 & 1661$\pm$64 & \bestgroupone{1696$\pm$133} & \bestgrouptwo{1997$\pm$37} \\
Lever Pull & 366$\pm$43 & 426$\pm$18 & \bestgroupone{366$\pm$45} & 426$\pm$18 & 343$\pm$33 & 406$\pm$32 & 347$\pm$32 & 450$\pm$51 & 331$\pm$49 & 426$\pm$42 & 343$\pm$24 & 439$\pm$20 & 338$\pm$33 & \bestgrouptwo{467$\pm$45} & 350$\pm$57 & 462$\pm$39 \\
Peg Insert Side & 882$\pm$87 & 1281$\pm$28 & \bestgroupone{882$\pm$87} & 1281$\pm$28 & 670$\pm$107 & 1511$\pm$49 & 836$\pm$98 & 1426$\pm$74 & 708$\pm$109 & 1508$\pm$20 & 731$\pm$119 & 1230$\pm$32 & 814$\pm$125 & 1463$\pm$65 & 505$\pm$136 & \bestgrouptwo{1545$\pm$47} \\
Plate Slide & 1049$\pm$85 & 1156$\pm$94 & 1049$\pm$89 & 1156$\pm$97 & \bestgroupone{1095$\pm$78} & \bestgrouptwo{1582$\pm$78} & 950$\pm$124 & 1395$\pm$108 & 1150$\pm$120 & 1617$\pm$62 & 1140$\pm$125 & 1218$\pm$92 & 955$\pm$104 & 1344$\pm$91 & 1070$\pm$130 & 1548$\pm$91 \\
Push Back & 312$\pm$56 & 484$\pm$39 & 312$\pm$56 & 484$\pm$39 & 270$\pm$65 & 607$\pm$111 & 261$\pm$88 & 629$\pm$45 & 262$\pm$53 & 698$\pm$62 & 300$\pm$58 & 490$\pm$47 & 203$\pm$68 & 584$\pm$55 & \bestgroupone{363$\pm$73} & \bestgrouptwo{872$\pm$70} \\
Sweep Into & 410$\pm$85 & 1001$\pm$54 & 410$\pm$90 & 1001$\pm$57 & \bestgroupone{475$\pm$85} & 1117$\pm$103 & 325$\pm$123 & 1142$\pm$100 & 397$\pm$57 & 1109$\pm$59 & 410$\pm$77 & 900$\pm$54 & 439$\pm$111 & 1128$\pm$88 & 463$\pm$99 & \bestgrouptwo{1326$\pm$85} \\
Window Close & 1095$\pm$102 & 1371$\pm$22 & 1095$\pm$97 & 1371$\pm$34 & 1198$\pm$104 & 1530$\pm$34 & 1015$\pm$106 & 1371$\pm$22 & 1107$\pm$229 & 1492$\pm$46 & 1034$\pm$109 & 1390$\pm$36 & 982$\pm$123 & 1358$\pm$47 & \bestgroupone{1373$\pm$90} & \bestgrouptwo{1572$\pm$17} \\
\bottomrule
\end{tabular}
\end{adjustbox}
\end{table}

\newpage
\section{Comparison of Segment- and State-Based Updates}
\label{appdx::Comparison of Segment and State-based updates}
The detailed per-task success rates for the policy update type and architecture ablation can be found in \cref{tab:tasks_performance_100} and \cref{tab:tasks_performance_1000}.
PAWS consistently outperforms the corresponding state-wise update for the same architecture with only few exceptions.
In the $n=50$ preferences case and applied to the Transformer network, a slightly lower success rate for \emph{Lever Pull} and a tie for \emph{Sweep Into} is achieved.
When more preferences, $n=500$, are available, only slight variations in success rate for \emph{Lever Pull } break the trend.

\vspace{0.5cm}

\begin{table}[h]
\centering
\small
\setlength\tabcolsep{6pt}
\caption{Performance results for individual tasks with \textbf{50} preferences. Values represent success rates (\%) $\pm$ 2SE. Best results are highlighted with \textbf{bold}.}
\label{tab:tasks_performance_100}
\begin{adjustbox}{max width=0.9\textwidth}
\small
\renewcommand{\arraystretch}{1.4}
\begin{tabular}{lcccc}
\toprule
\textbf{Task} & \textbf{PAWS (MLP)} & \textbf{PAWS (Transformer)} & \textbf{State (MLP)} & \textbf{State (Transformer)} \\
\midrule
Button Press & \textbf{82$\pm$6} & 80$\pm$5 & 72$\pm$6 & 74$\pm$5 \\
Door Open & 65$\pm$14 & \textbf{70$\pm$8} & 40$\pm$12 & 42$\pm$9 \\
Drawer Open & 39$\pm$12 & \textbf{44$\pm$11} & 22$\pm$10 & 31$\pm$13 \\
Faucet Close & \textbf{68$\pm$8} & 67$\pm$10 & 64$\pm$9 & 60$\pm$8 \\
Lever Pull & 30$\pm$12 & 28$\pm$6 & 16$\pm$11 & \textbf{32$\pm$9} \\
Peg Insert Side & 23$\pm$9 & \textbf{24$\pm$7} & 19$\pm$9 & 20$\pm$9 \\
Plate Slide & \textbf{49$\pm$9} & 48$\pm$6 & 37$\pm$6 & 45$\pm$8 \\
Push Back & 24$\pm$4 & \textbf{26$\pm$5} & 17$\pm$3 & 19$\pm$5 \\
Sweep Into & \textbf{36$\pm$9} & 34$\pm$8 & 33$\pm$6 & 34$\pm$5 \\
Window Close & 91$\pm$7 & \textbf{94$\pm$5} & 79$\pm$8 & 80$\pm$8 \\
\bottomrule
\end{tabular}
\end{adjustbox}
\end{table}

\begin{table}[h]
\centering
\small
\setlength\tabcolsep{6pt}
\caption{Performance results for individual tasks with \textbf{500} preferences. Values represent success rates (\%) $\pm$ 2SE. Best results are highlighted with \textbf{bold}.}
\label{tab:tasks_performance_1000}
\begin{adjustbox}{max width=0.9\textwidth}
\small
\renewcommand{\arraystretch}{1.4}
\begin{tabular}{lcccc}
\toprule
\textbf{Task} & \textbf{PAWS (MLP)} & \textbf{PAWS (Transformer)} & \textbf{State (MLP)} & \textbf{State (Transformer)} \\
\midrule
Button Press & 82$\pm$4 & \textbf{84$\pm$4} & 75$\pm$4 & 78$\pm$5 \\
Door Open & \textbf{98$\pm$1} & 96$\pm$1 & 82$\pm$4 & 64$\pm$8 \\
Drawer Open & \textbf{75$\pm$3} & 74$\pm$3 & 59$\pm$4 & 66$\pm$3 \\
Faucet Close & \textbf{87$\pm$3} & \textbf{87$\pm$3} & 71$\pm$10 & 73$\pm$5 \\
Lever Pull & 55$\pm$4 & \textbf{58$\pm$5} & 56$\pm$4 & 55$\pm$3 \\
Peg Insert Side & \textbf{82$\pm$3} & 81$\pm$4 & 68$\pm$5 & 55$\pm$4 \\
Plate Slide & \textbf{78$\pm$5} & 74$\pm$5 & 54$\pm$6 & 52$\pm$7 \\
Push Back & 53$\pm$3 & \textbf{56$\pm$3} & 44$\pm$4 & 36$\pm$6 \\
Sweep Into & \textbf{74$\pm$4}& \textbf{74$\pm$3} & 67$\pm$3 & 63$\pm$2 \\
Window Close & \textbf{99$\pm$0} & 98$\pm$1 & 91$\pm$2 & 91$\pm$3 \\
\bottomrule
\end{tabular}
\end{adjustbox}
\end{table}
\section{DPPO}
\label{appdx:DPPO}

Results for DPPO~\citep{an2023direct}, which were omitted due to space constraints and comparatively weak performance, are given below.
Meta-World tasks are presented in~\cref{tab:DPPO_mw} and locomotion tasks in~\cref{tab:DPPO_mj}.

\begin{table}[t]
\centering
\caption{DPPO task success (\%) $\pm$ 2SE.}
\label{tab:DPPO_mw}
\begin{tabular}{lcc}
\toprule
\textbf{Task} & \textit{50} & \textit{500} \\
\midrule
Button Press & 15 $\pm$ 4 & 15 $\pm$ 3 \\
Door Open & 15 $\pm$ 5 & 15 $\pm$ 6 \\
Drawer Open & 9 $\pm$ 3 & 13 $\pm$ 2 \\
Faucet Close & 22 $\pm$ 5 & 32 $\pm$ 5 \\
Lever Pull & 6 $\pm$ 1 & 4 $\pm$ 1 \\
Peg Insert Side & 1 $\pm$ 0 & 1 $\pm$ 0 \\
Plate Slide & 17 $\pm$ 5 & 10 $\pm$ 2 \\
Push Back & 4 $\pm$ 1 & 3 $\pm$ 1 \\
Sweep Into & 9 $\pm$ 2 & 9 $\pm$ 2 \\
Window Close & 29 $\pm$ 7 & 35 $\pm$ 3 \\
\bottomrule
\end{tabular}
\end{table}

\begin{table}[t]
\centering
\caption{DPPO average episode returns $\pm$ 2SE.}
\label{tab:DPPO_mj}
\begin{tabular}{lcc}
\toprule
\textbf{Task} & \textit{50} & \textit{500} \\
\midrule
HalfCheetah & -90$\pm$10 & -97$\pm$14 \\
Hopper & 40$\pm$59 & 74$\pm$31 \\
Walker2d & 35$\pm$27 & 32$\pm$13 \\
Ant & -554$\pm$28 & -424$\pm$22 \\
\bottomrule
\end{tabular}
\end{table}
\newpage
\section{Preference Dataset Generation}
\label{appdx:PreferenceDataset}
We evaluate on Meta-World environments with the modifications of CPL~\citep{hejna2024contrastive}.
In contrast to the original Meta-World tasks \citep{yu2020meta}, these are modified by randomizing the goal but including the target in the state observation, as well as randomizing the initial robot position and removing the proprioceptive state history from the observation. Additionally, we evaluate our method on four locomotion tasks, namely Ant, HalfCheetah, Hopper, and Walker2d.
For all tasks, we train one policy with SAC~\citep{haarnoja2018soft} and choose rollouts from four checkpoints throughout training.
For the Meta-World tasks, exact step counts, average return, and success rate are given in \cref{tab:rollout_metaworld}. For the locomotion tasks, the relevant values are given in \cref{tab:rollout_mujoco}.

\begin{table}[h]
\centering
\footnotesize
\setlength\tabcolsep{2pt}
\caption{Dataset quality for the different Meta-World tasks.}
\label{tab:rollout_metaworld}
\begin{adjustbox}{max width=\textwidth}
\small
\renewcommand{\arraystretch}{1.4}
\begin{tabular}{lrccrccrccrcc}
\toprule
\multicolumn{1}{c}{}
& \multicolumn{3}{c}{\textbf{Policy 1}}
& \multicolumn{3}{c}{\textbf{Policy 2}}
& \multicolumn{3}{c}{\textbf{Policy 3}}
& \multicolumn{3}{c}{\textbf{Best Policy}}\\

\cmidrule(lr){2-4}\cmidrule(lr){5-7}\cmidrule(lr){8-10}\cmidrule(lr){11-13}
\textbf{Task} &
\textbf{Step} & \textbf{Return} & \textbf{Success} &
\textbf{Step} & \textbf{Return} & \textbf{Success} &
\textbf{Step} & \textbf{Return} & \textbf{Success} &
\textbf{Step} & \textbf{Return} & \textbf{Success} \\
\midrule
Button Press & 40k  & 509  & 16\% & 70k  & 1335 & 73\% & 150k & 1320 & 74\% &240k     & 1709 & 96\% \\
Door Open    & 30k  & 462  & 2\%    & 50k  & 708  & 41\%   & 70k  & 1314 & 83\%   &  830k   & 1994 & 100\%  \\
Drawer Open  & 210k & 1279 & 8\% & 270k & 1535 & 61\% & 290k & 1640 & 76\% & 350k & 1786 & 91\% \\
Faucet Close & 30k  & 849  & 1\% & 60k  & 1346 & 36\% & 90k  & 1451 & 51\% & 140k & 2124 & 98\% \\
Lever Pull   & 30k  & 204  & 0\% & 190k & 265  & 23\% & 300k & 382  & 52\% & 640k & 720  & 80\% \\
Peg Insert Side   & 340k & 735  & 4\% & 390k & 1201 & 33\% & 410k & 1507 & 86\% & 480k & 1760 & 97\% \\
Plate Slide  & 50k  & 487  & 9\% & 60k  & 448  & 8\% & 120k & 1543 & 78\% & 250k & 2006 & 99\% \\
Push Back    & 280k & 47   & 12\% & 290k & 102  & 20\% & 410k & 573  & 57\% & 600k & 1622 & 97\% \\
Sweep Into   & 50k  & 142  & 0\% & 150k & 490  & 52\%   & 300k & 1176 & 81\%   & 910k     & 1958 & 97\%   \\
Window Close & 30k  & 240  & 5\% & 70k  & 669  & 48\% & 80k  & 906  & 14\% & 120k & 1524 & 99\% \\
\bottomrule
\end{tabular}
\end{adjustbox}
\end{table}

\begin{table}[h]
\centering
\footnotesize
\setlength\tabcolsep{4pt}
\caption{Dataset quality for the different locomotion tasks.}
\label{tab:rollout_mujoco}
\begin{adjustbox}{max width=\textwidth}
\small
\renewcommand{\arraystretch}{1.4}
\begin{tabular}{lrcrcrcrcc}
\toprule
\multicolumn{1}{c}{}
& \multicolumn{2}{c}{\textbf{Policy 1}}
& \multicolumn{2}{c}{\textbf{Policy 2}}
& \multicolumn{2}{c}{\textbf{Policy 3}}
& \multicolumn{2}{c}{\textbf{Policy 4}}\\
\cmidrule(lr){2-3}\cmidrule(lr){4-5}\cmidrule(lr){6-7}\cmidrule(lr){8-9}
\textbf{Task} &
\textbf{Step} & \textbf{Return} &
\textbf{Step} & \textbf{Return} &
\textbf{Step} & \textbf{Return} &
\textbf{Step} & \textbf{Return} \\
\midrule
Ant         & 10k  & 245  & 500k & 376  & 1M   & 660  & 2M   & 935  \\
HalfCheetah & 30k  & 76   & 90k  & 1071 & 200k & 1583 & 300k & 2499 \\
Hopper      & 30k  & 339  & 160k & 289  & 260k & 693  & 400k & 788  \\
Walker2d    & 60k  & 59   & 280k & 260  & 340k & 527  & 500k & 1093 \\
\bottomrule
\end{tabular}
\end{adjustbox}
\end{table}

\begin{figure*}[h]
  \centering
  \begin{subfigure}[t]{0.19\textwidth}
    \includegraphics[width=\linewidth]{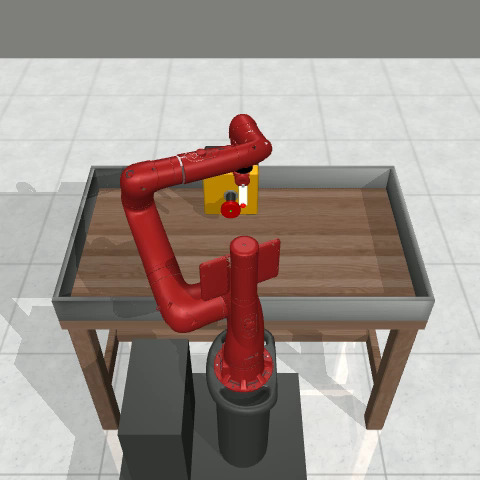}
    \caption{Button Press}\label{fig:mw:a}
  \end{subfigure}\hfill
  \begin{subfigure}[t]{0.19\textwidth}
    \includegraphics[width=\linewidth]{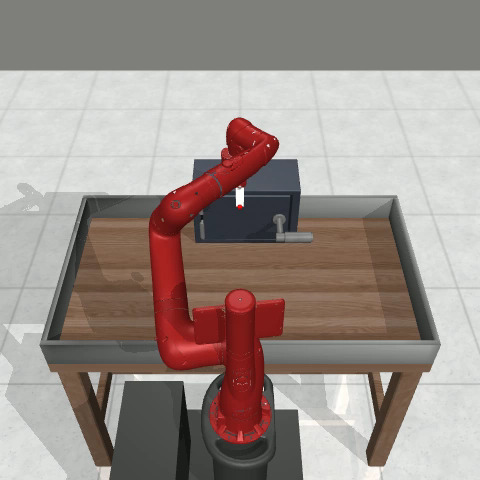}
    \caption{Door Open}\label{fig:mw:b}
  \end{subfigure}\hfill
  \begin{subfigure}[t]{0.19\textwidth}
    \includegraphics[width=\linewidth]{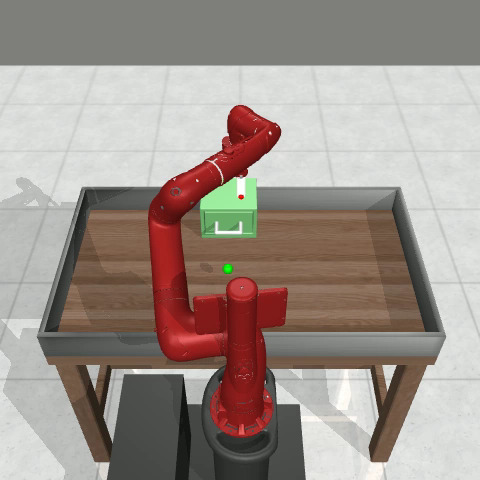}
    \caption{Drawer Open}\label{fig:mw:c}
  \end{subfigure}\hfill
  \begin{subfigure}[t]{0.19\textwidth}
    \includegraphics[width=\linewidth]{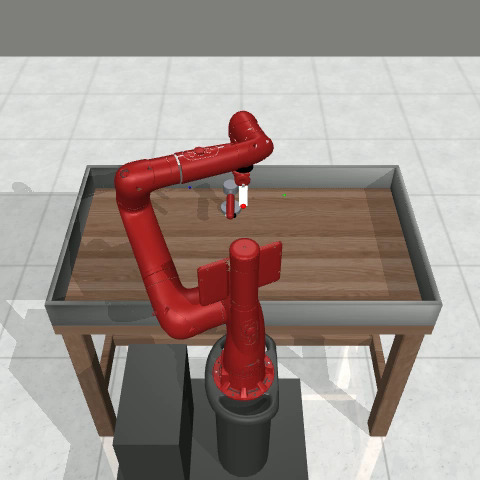}
    \caption{Faucet Close}\label{fig:mw:d}
  \end{subfigure}\hfill
  \begin{subfigure}[t]{0.19\textwidth}
    \includegraphics[width=\linewidth]{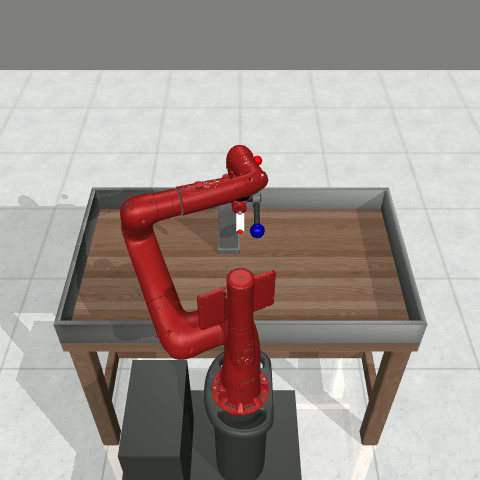}
    \caption{Lever Pull}\label{fig:mw:e}
  \end{subfigure}

  \vspace{4pt}

  \begin{subfigure}[t]{0.19\textwidth}
    \includegraphics[width=\linewidth]{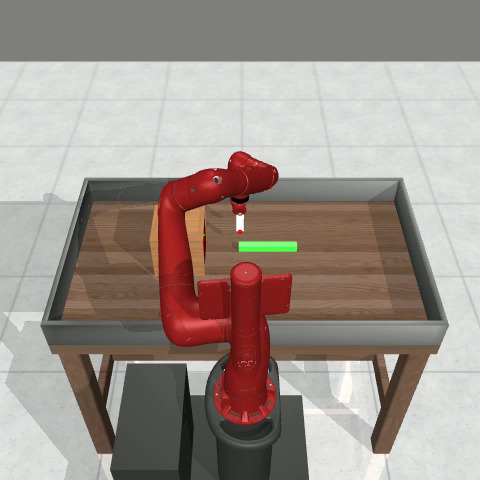}
    \caption{Peg Insert Side}\label{fig:mw:f}
  \end{subfigure}\hfill
  \begin{subfigure}[t]{0.19\textwidth}
    \includegraphics[width=\linewidth]{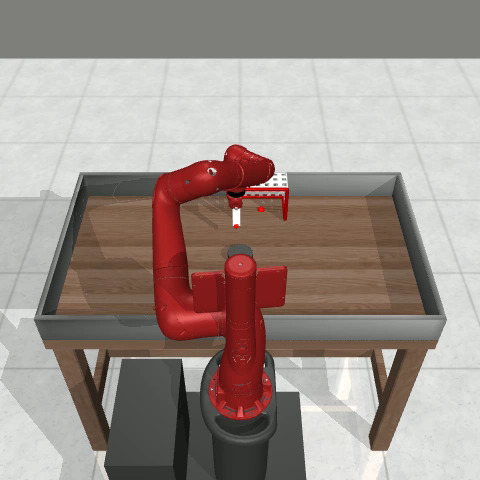}
    \caption{Plate Slide}\label{fig:mw:g}
  \end{subfigure}\hfill
  \begin{subfigure}[t]{0.19\textwidth}
    \includegraphics[width=\linewidth]{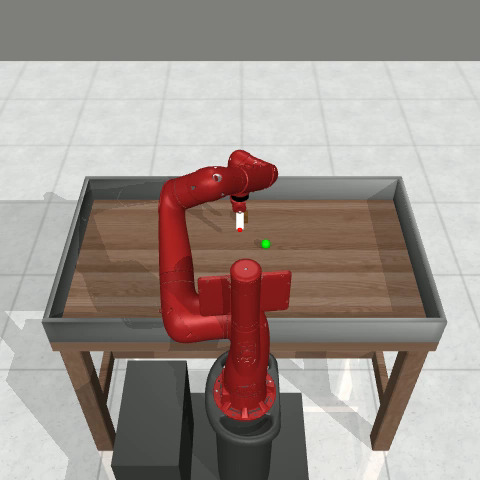}
    \caption{Push Back}\label{fig:mw:h}
  \end{subfigure}\hfill
  \begin{subfigure}[t]{0.19\textwidth}
    \includegraphics[width=\linewidth]{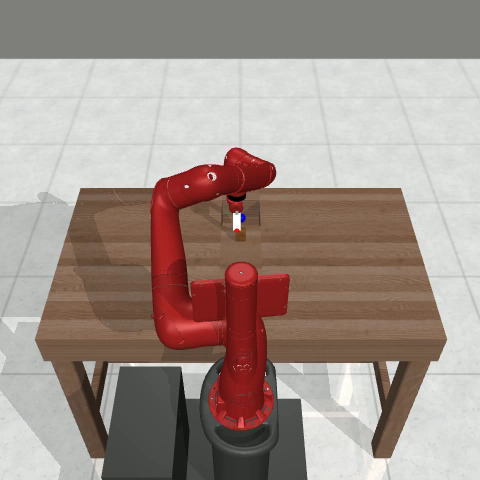}
    \caption{Sweep Into}\label{fig:mw:i}
  \end{subfigure}\hfill
  \begin{subfigure}[t]{0.19\textwidth}
    \includegraphics[width=\linewidth]{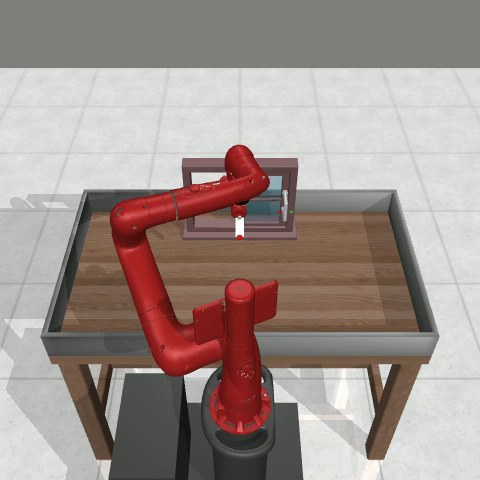}
    \caption{Window Close}\label{fig:mw:j}
  \end{subfigure}

  \caption{Meta-World manipulation tasks used in our experiments. Each panel shows an initial configuration (a–j).}
  \label{fig:metaworld}
\end{figure*}

\subsection{Meta-World Task Descriptions}
\label{appdx:Mw_desc}
We evaluate on 10 Meta-World manipulation tasks shown in \cref{fig:metaworld}: \emph{(a) Button Press} press a small button on the panel until it activates; \emph{(b) Door Open} open a hinged door by pulling the handle to a target angle; \emph{(c) Drawer Open} pull the drawer along its rail until it reaches the goal extension; \emph{(d) Faucet Close} rotate the faucet handle clockwise until it is fully off; \emph{(e) Lever Pull} pull the short lever down through a quarter turn; \emph{(f) Peg Insert Side} insert a cylindrical peg into a horizontal side hole without jamming; \emph{(g) Plate Slide} push the plate across the table into the marked goal region; \emph{(h) Push Back} push the movable puck backward to the target position; \emph{(i) Sweep Into} sweep a small object across the surface into a container opening; \emph{(j) Window Close} push the sliding window pane along its track until fully closed.

\subsection{Locomotion Task Descriptions}
We evaluate four different locomotion tasks~\citep{towers2024gymnasium}, namely Ant, HalfCheetah, Hopper, and Walker2d. All tasks have the same episode length of 250 steps, and episodes terminate only when the final step is reached. There are no early-termination conditions, which are otherwise present in \citet{towers2024gymnasium}, because this is required for consistent generation of the trajectories used for preferences. The tasks are visualized in \cref{fig:locomotion}.

\begin{figure*}[h]
  \centering
   \begin{subfigure}[t]{0.22\textwidth}
    \includegraphics[width=\linewidth]{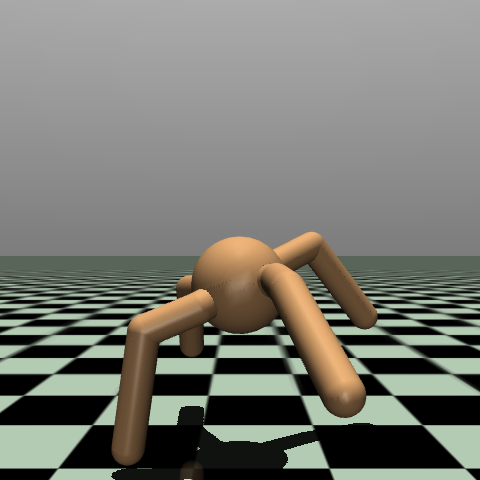}
    \caption{Ant}\label{fig:lm:a}
      \end{subfigure}\hfill
  \begin{subfigure}[t]{0.22\textwidth}
    \includegraphics[width=\linewidth]{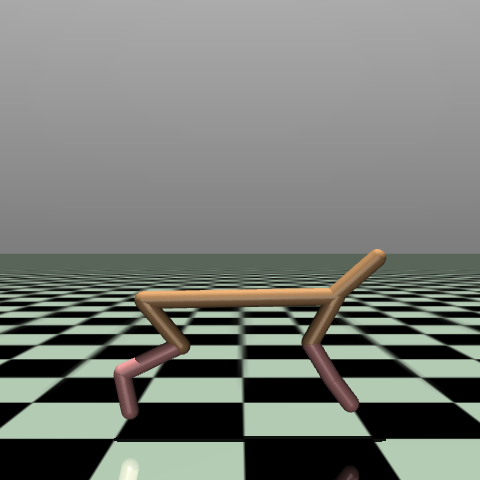}
    \caption{HalfCheetah}\label{fig:lm:b}
  \end{subfigure}\hfill
  \begin{subfigure}[t]{0.22\textwidth}
    \includegraphics[width=\linewidth]{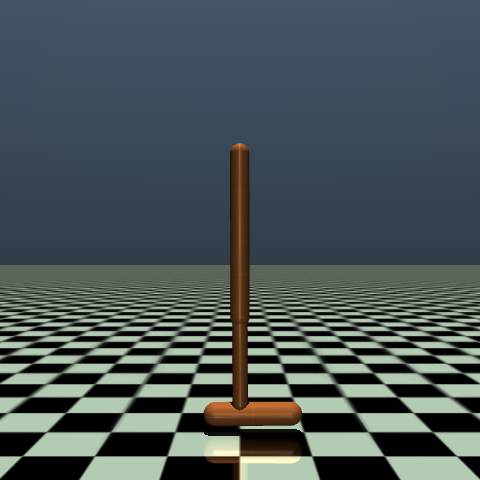}
    \caption{Hopper}\label{fig:lm:c}
  \end{subfigure}\hfill
  \begin{subfigure}[t]{0.22\textwidth}
    \includegraphics[width=\linewidth]{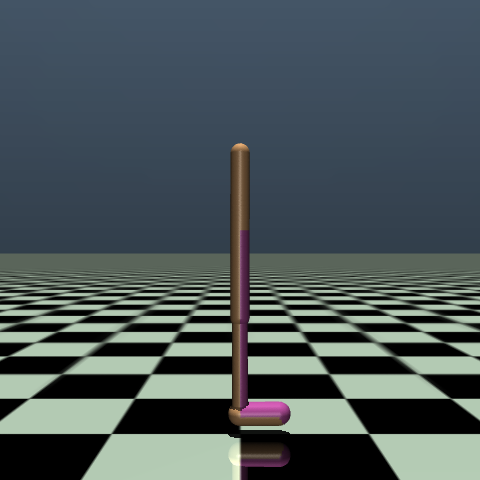}
    \caption{Walker2d}\label{fig:lm:d}
  \end{subfigure}\hfill
  \caption{Locomotion tasks.}
  \label{fig:locomotion}
\end{figure*}
\section{Algorithm Details and Hyperparameters}
\label{app:hp}
The method hyperparameters are listed in the following tables: \cref{tab:hyperparameters} for the baselines and \cref{tab:hyperparameters_paws} for our method. Our method is also summarized in \cref{alg:offline-paws}.

\begin{table}[h]
\centering
\small
\setlength\tabcolsep{6pt}
\caption{Hyperparameters for the baselines.}
\label{tab:hyperparameters}
\begin{adjustbox}{max width=0.7\textwidth}
\small
\renewcommand{\arraystretch}{1.4}
\begin{tabular}{lccccc}
\toprule
\textbf{Hyperparameter} & \textbf{CPL} & \textbf{CPL (KL)} & \textbf{P-IQL} & \textbf{Pref Trans.} & \textbf{IPL} \\
\midrule
Learning Rate & 0.0001 & 0.0001 & 0.0003 & 0.0003 & 0.0001 \\
Temp $\alpha$ & 0.1 & 0.2 & - & - & - \\
Bias $\lambda$ & 0.5 & 0.75 & - & - & - \\
$\gamma$ & - & - & 0.99 & 0.99 & 0.99 \\
Expectile $\tau$ & - & - & 0.7 & 0.7 & 0.75 \\
Temperature $\beta$ & - & - & 0.3333 & 3.0 & 0.3333 \\
Hidden Dims & - & - & - & (256, 256) & (512, 512) \\
Soft Target $\tau$ & - & - & - & 0.005 & 0.005 \\
Reward Net Steps & - & - & 50000 & - & - \\
Evaluation Step  & 5000 & 5000 & 5000 & 5000 & 5000 \\
\bottomrule
\end{tabular}
\end{adjustbox}
\end{table}

\begin{table}[h]
\centering
\small
\setlength\tabcolsep{6pt}
\caption{Hyperparameters for PAWS.}
\label{tab:hyperparameters_paws}
\begin{adjustbox}{max width=0.85\textwidth}
\small
\renewcommand{\arraystretch}{1.4}
\begin{tabular}{lcc}
\toprule
\textbf{Hyperparameter} & \textbf{PAWS (MLP)} & \textbf{PAWS (Transformer)} \\
\midrule
\textbf{Actor networks} & & \\

Learning rate & 0.0003 & 0.0003 \\
Dropout & 0.25 & 0.25 \\
Hidden dimension & 512 & 512 \\
Hidden depth & 2 & 2 \\
Evaluation Step  & 250 & 250 \\
Effective sample size $n^*_{\mathrm{eff}}$ (Meta-World / Locomotion) & $10\%$ / $30\%$ & $10\%$ / $30\%$ \\
\midrule
\textbf{Reward networks} & & \\

Learning rate & 0.0003 & 0.0003 \\
Max Len & - & 64 \\
Dropout & - & 0.1 \\
Hidden dimension & 512 & 512 \\
Number of heads & - & 8 \\
Number of layers & 3 & 4 \\
Position encoding & - & learned \\
Min. early stopping value $\alpha$ & 0.995 & 0.995\\
\bottomrule
\end{tabular}
\end{adjustbox}
\end{table}

\begin{algorithm}[t]
\caption{\textbf{PAWS: Preference Learning with Segment-Level Advantage Optimization}}
\label{alg:offline-paws}
\begin{algorithmic}[1]

\State \textbf{Input:} Offline segment dataset 
$\mathcal{D}=\{\tau_i\}_{i=1}^K$ sampled from $p_{\mathcal{D}}(\tau)$
\State \textbf{Input:} Offline preference dataset 
$\mathcal{D}_{\mathrm{pref}} = \{(\tau_i^+, \tau_i^-)\}_{i=1}^n$, 
where $\tau_i^+ \succ \tau_i^-$
\State \textbf{Input:} Advantage model $A_\phi(s,a)$, policy $\pi_\theta(a\mid s)$
\State Initialize $\phi$ (advantage) and $\theta$ (policy)

\Statex \vspace{2mm}\textbf{1. Advantage learning on preference data}
\While{advantage parameters $\phi$ are not converged}
 
    \State Minimize preference loss (\cref{eq::AdvOpt})
    \[
        \mathcal{L}_{\mathrm{pref}}(\phi)
        = -\frac{1}{n}\sum_{(\tau^+,\tau^-)\in\mathcal{D}_{\mathrm{pref}}}
        \log\sigma(A_\phi(\tau^+)-A_\phi(\tau^-))
    \]
    \State Update $\phi \leftarrow \phi - \alpha_\phi \nabla_\phi \mathcal{L}_{\mathrm{pref}}$
\EndWhile
\Statex \vspace{2mm}\textbf{2. Lagrange Multiplier Computation}
\State Find $\lambda$ resulting in desired $n_{\mathrm{eff}}^*$ using \cref{eq::n_eff}
\State $\lambda \gets \operatorname{argmin}_\lambda \left| n_{\mathrm{eff}}^* - n_{\mathrm{eff}}(\lambda) \right|$

\Statex \vspace{2mm}\textbf{3. Policy update}
\While{policy parameters $\theta$ are not converged}
    \For{each $\tau_i \in \mathcal{D}$}
        \State \textbf{Compute importance weights (\cref{eq::ImpWeightDef})}
        \State $w_i \gets \exp\!\big(\tfrac{1}{\lambda} A_\phi(\tau_i)\big)$
        \State {Self-normalize for numerical stability}
        \State $w_i \gets \frac{w_i}{\sum_j w_j}$
    \EndFor

    \State \textbf{Policy update using weighted maximum-likelihood (\cref{eq::grad_ml_loss_sa})}
    \State Compute gradient:
    \[
        \nabla_\theta\mathcal{L}
        = \frac{1}{K} \sum_i w_i \sum_t \nabla_\theta \log\pi_\theta(a^i_t \mid s^i_t)
    \]
    \State Update $\theta \leftarrow \theta + \alpha_\theta \nabla_\theta \mathcal{L}$

\EndWhile

\State \textbf{return} $\pi_\theta$

\end{algorithmic}
\end{algorithm}

\section{Ablation: Spearman's Rank Correlation Coefficients}
\label{sec:spearman}
To compute the correlation of the learned model's action likelihoods with the ground-truth advantage, we compute Spearman's correlation coefficient between the log likelihoods of the expert policy of a segment and the log likelihoods coming from our trained policy for the same segment.
The Spearman's correlation coefficient of sequences $X_i$ and $Y_i$ first gives each value an integer rank in increasing order and in particular for unique values, it holds that 
\begin{equation}
    \operatorname{rank}(X) = |\{X_j \le X\}_j|.
\end{equation}
Then the sequences $\operatorname{rank}(X_i)$ and $\operatorname{rank}(Y_i)$ are compared using the Pearson correlation coefficient.
In contrast to the direct Pearson correlation coefficient, this also measures nonlinear but monotonic correlation between the two sequences.
This is relevant for measuring the credit assignment as this removes the impact of the absolute quality of the policy likelihood fit to the expert policy likelihood, that is present in a linear correlation measurement, but measures whether the same state-action pairs of the sequence have high or low likelihood, hence inferred advantage, \emph{within} the segment. 

We use the Transformer-based advantage function and evaluate after 10{,}000 policy update steps.
For each of $K=1{,}000$ segments per task, we determine the rank correlation between the predicted policy likelihoods and the ground-truth expert policy likelihoods, that were used for preference label generation.
Practically, we rank the log likelihoods which is equivalent due to the monotonicity of the logarithm.
The reported values in~\cref{tab:spearman_correlation} are averages over the training results of 5 seeds.

\begin{table}[H]
\centering
\small
\setlength\tabcolsep{4pt}
\caption{Spearman correlation coefficient between the action likelihoods of the learned policy and the expert policy for each Meta-World task. We compare the \textbf{State} versus the \textbf{Segment} representation.}
\label{tab:spearman_correlation}
\begin{adjustbox}{max width=\textwidth}
\footnotesize
\renewcommand{\arraystretch}{1.4}
\begin{tabular}{lcccccccccc|c}
\toprule
& \textbf{Button Press} & \textbf{Door Open} & \textbf{Drawer Open} & \textbf{Faucet Close} & \textbf{Lever Pull} & \textbf{Peg Insert Side} & \textbf{Plate Slide} & \textbf{Push Back} & \textbf{Sweep Into} & \textbf{Window Close} & \textbf{Mean} \\
\midrule
\textbf{State}   & 0.06           &          0.11 &         -0.34 &          0.35 & \textbf{-0.1} &          0.03 &          0.26 &          0.05 &         -0.04 &          0.16 & \textbf{0.054} \\
\textbf{Segment} & \textbf{0.36 } & \textbf{0.21} & \textbf{0.03} & \textbf{0.57} &         -0.26 & \textbf{0.24} & \textbf{0.27} & \textbf{0.20} & \textbf{0.00} & \textbf{0.55} & \textbf{0.217} \\
\bottomrule
\end{tabular}
\end{adjustbox}
\vspace{-2.5mm}
\end{table}
\section{Human Preferences}
\label{app:human}
We collected preferences from 10 non-author human labelers on two Meta-World tasks, Button Press and Door Open. Each labeler provided 50 pairwise comparisons per task from 100 segments, yielding 500 comparisons per task in total. The preference pairs were generated in the same way as in \cref{appdx:PreferenceDataset}, with the labels provided by humans rather than the oracle. To collect the labels, we used the GUI shown in \cref{fig:GUI}, which presented the two trajectory videos side by side and required the labeler to choose one of them as preferred (no tie option was provided). Each labeler was provided with the task description as defined in \cref{appdx:Mw_desc}.

The data collection does not constitute human subjects research in the regulated sense. The participants viewed pairs of robot execution videos and indicated preferences, with no personal data collected, anonymous participation, and no intervention or risk. The object of study is the robot executions; participants served purely as annotators, analogous to crowdsourced labeling. Our institution does not mandate ethics approval for anonymous, minimal-risk annotation of this kind.

To assess statistical significance, we performed pairwise Welch's t-tests comparing each PAWS variant against all baselines. On Door Open, PAWS (MLP) significantly outperforms all baselines, and PAWS (Trans.) significantly outperforms all methods except Preference Transformer. After applying Bonferroni correction ($\alpha=0.05/24=0.0021$, where $24 = 2~\text{tasks} \times 2~\text{PAWS variants} \times 6~\text{baselines}$), PAWS (MLP) on Door Open remains significant against BC, P-IQL, CPL, CPL+KL, and IPL ($p_\text{Bonf}<0.05$). Under the less conservative Benjamini-Hochberg FDR correction ($q=0.05$), PAWS (MLP) on Door Open additionally retains significance against Pref Trans. ($p_\text{BH}<0.05$), and both PAWS variants retain significance against BC on Button Press ($p_\text{BH}<0.01$).

\begin{figure}
    \centering
    \includegraphics[width=1\linewidth]{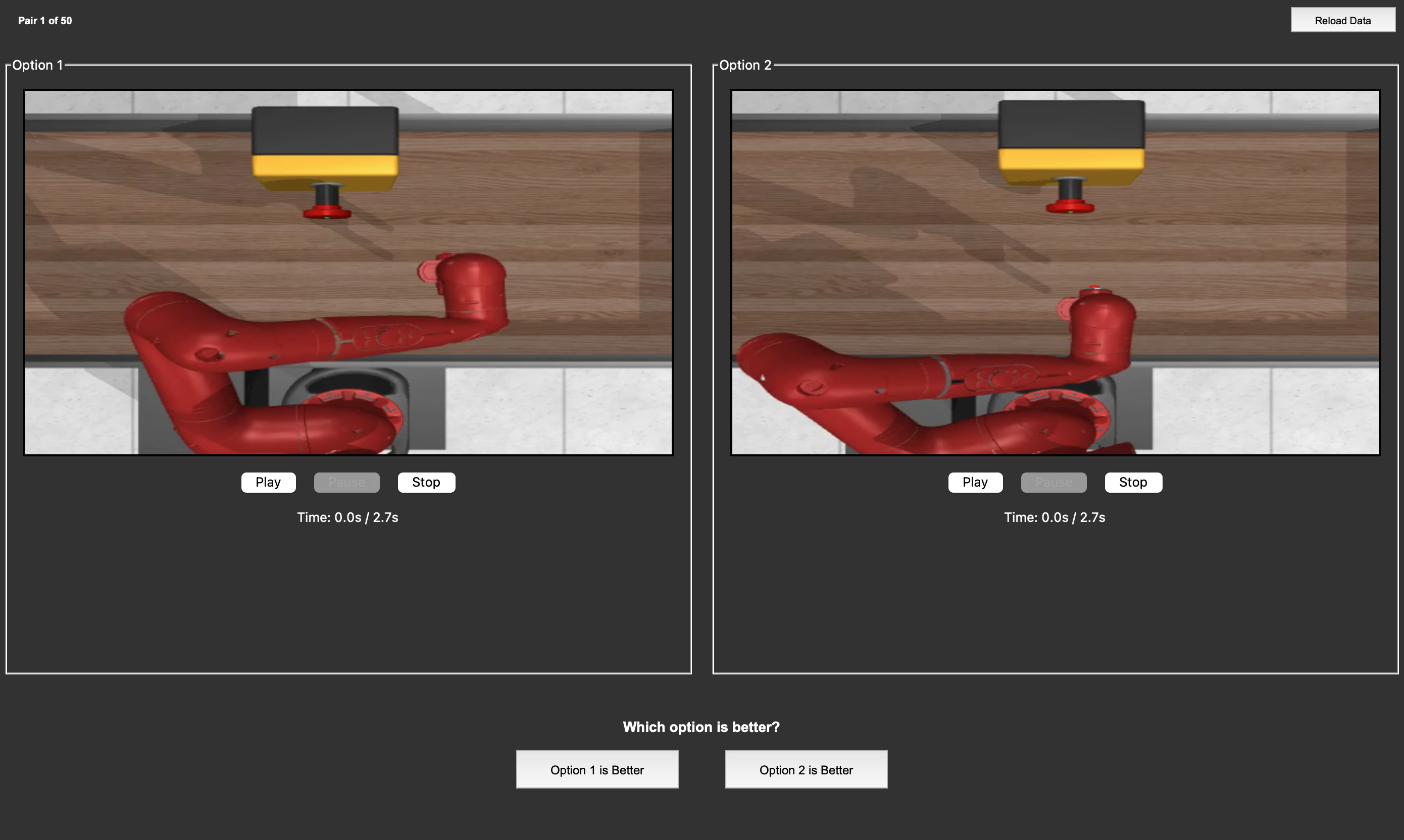}
    \caption{GUI for collecting preference labels from humans, shown here for the Button Press task. The labeler sees two robot execution videos side by side, with Play, Pause, and Stop controls for each, and selects which option is better. The header shows progress through the 50 pairs for the current task. The Door Open task uses the same layout with different videos.}
    \label{fig:GUI}
\end{figure}
%%%%%%%%%%%%%%%%%%%%%%%%%%%%%%%%%%%%%%%%%%%%%%%%%%%%%%%%%%%%%%%%%%%%%%%%%%%%%%%
%%%%%%%%%%%%%%%%%%%%%%%%%%%%%%%%%%%%%%%%%%%%%%%%%%%%%%%%%%%%%%%%%%%%%%%%%%%%%%%

\end{document}